\newif\ifarxiv
\newif\ifsubmit
\newif\ifwithappendix
\newtheorem{theorem}{Theorem}
\newtheorem{lemma}{Lemma}
\newtheorem{assumption}{Assumption}
\newcommand{\yuandong}[1]{}
\newcommand{\ari}[1]{}
\newcommand{\haonan}[1]{}
\newcommand{\yuandong}[1]{\textcolor{red}{[Yuandong: #1]}}
\newcommand{\ari}[1]{\textcolor{blue}{[Ari: #1]}}
\newcommand{\haonan}[1]{\textcolor{green}{[Haonan: #1]}}
\def\dd{\mathrm{d}}
\def\vw{\mathbf{w}}
\def\vh{\mathbf{h}}
\def\vv{\mathbf{v}}
\def\vg{\mathbf{g}}
\def\vc{\mathbf{c}}
\def\vf{\mathbf{f}}
\def\vp{\mathbf{p}}
\def\vq{\mathbf{q}}
\def\tt#1{{#1}^\circ}
\def\t#1{#1^\circ}
\def\ee#1{\mathbb{E}\left[#1\right]}
\def\ee2#1#2{\mathbb{E}_{#1}\left[#2\right]}
\title{Luck Matters: Understanding Training Dynamics of Deep ReLU Networks}
\author{Yuandong Tian$\quad\quad$ Tina Jiang$\quad\quad$ Qucheng Gong$\quad\quad$ Ari Morcos \\
Facebook AI Research \\
\texttt{\{yuandong, tinayujiang, qucheng, arimorcos\}@fb.com}}
\def\dd{\mathrm{d}}
\def\norm#1{\|#1\|}
\def\vbeta{\boldsymbol{\beta}}
\begin{document}
\maketitle

\begin{abstract}
    We analyze the dynamics of training deep ReLU networks and their implications on generalization capability. Using a teacher-student setting, we discovered a novel relationship between the gradient received by hidden student nodes and the activations of teacher nodes for deep ReLU networks. With this relationship and the assumption of small overlapping teacher node activations, we prove that (1) student nodes whose weights are initialized to be close to teacher nodes converge to them at a faster rate, and (2) in over-parameterized regimes and 2-layer case, while a small set of lucky nodes do converge to the teacher nodes, the fan-out weights of other nodes converge to zero. This framework provides insight into multiple puzzling phenomena in deep learning like over-parameterization, implicit regularization, lottery tickets, etc. We verify our assumption by showing that the majority of BatchNorm biases of pre-trained VGG11/13/16/19 models are negative. Experiments on (1) random deep teacher networks with Gaussian inputs, (2) teacher network pre-trained on CIFAR-10 and (3) extensive ablation studies validate our multiple theoretical predictions. Code is available at \url{https://github.com/facebookresearch/luckmatters}. 
\end{abstract}

\section{Introduction}

Although neural networks have made strong empirical progress in a diverse set of domains (e.g., computer vision~\cite{alexnet,vgg,resnet}, speech recognition~\cite{hinton2012deep,amodei2016deep}, natural language processing~\cite{word2vec, bert}, and games~\cite{alphago,alphagozero,darkforest, mnih2013playing}), a number of fundamental questions still remain unsolved. How can Stochastic Gradient Descent (SGD) find good solutions to a complicated non-convex optimization problem? Why do neural networks generalize? How can networks trained with SGD fit both random noise and structured data~\cite{rethinking,krueger2017deep,neyshabur2017exploring}, but prioritize structured models, even in the presence of massive noise~\cite{rolnick2017deep}? Why are flat minima related to good generalization? Why does over-parameterization lead to better generalization~\cite{neyshabur2018towards,zhang2019identity,spigler2018jamming,neyshabur2014search,li2018measuring}? Why do lottery tickets exist~\cite{lottery,lottery-scale}?

In this paper, we propose a theoretical framework for multilayered ReLU networks. Based on this framework, we try to explain these puzzling empirical phenomena with a unified view. We adopt a teacher-student setting where the label provided to an \emph{over-parameterized} deep student ReLU network is the output of a fixed teacher ReLU network of the same depth and unknown weights (Fig.~\ref{fig:intro}(a)). Here over-parameterization means that at each layer, the number of nodes in student network is more than the number of nodes in the teacher network. In this perspective, hidden student nodes are randomly initialized with different activation regions (Fig.~\ref{fig:implicit-reg}(a)). During optimization, student nodes compete with each other to explain teacher nodes. From this setting, Theorem~\ref{thm:close-convergence} shows that \emph{lucky} student nodes which have greater overlap with teacher nodes converge to those teacher nodes at a \emph{fast rate}, resulting in \emph{winner-take-all} behavior. Furthermore, Theorem~\ref{thm:over-parameterization} shows that in the 2-layer case, if a subset of student nodes are close to the teachers', they converge to them and the fan-out weights of other irrelevant nodes of the same layer vanishes. 

With this framework, we try to intuitively explain various neural network behaviors as follows: 

\textbf{Fitting both structured and random data}. Under gradient descent dynamics, some student nodes, which happen to overlap substantially with teacher nodes, will move into the teacher node and cover them. This is true for both structured data that corresponds to small teacher networks with few intermediate nodes, or noisy/random data that correspond to large teachers with many intermediate nodes. This explains why the same network can fit both structured and random data (Fig.~\ref{fig:implicit-reg}(a-b)).

\textbf{Over-parameterization}. In over-parameterization, lots of student nodes are initialized randomly at each layer. Any teacher node is more likely to have a substantial overlap with some student nodes, which leads to fast convergence (Fig.~\ref{fig:implicit-reg}(a) and (c), Thm.~\ref{thm:close-convergence}), consistent with \cite{lottery, lottery-scale}. This also explains that training models whose capacity just fit the data (or teacher) yields worse performance~\cite{li2018measuring}. 

\textbf{Flat minima}. Deep networks often converge to ``flat minima'' whose Hessian has a lot of small eigenvalues~\cite{sagun2016eigenvalues,sagun2017empirical,lipton2016stuck,baity2018comparing}. Furthermore, while controversial~\cite{dinh2017sharp}, flat minima seem to be associated with good generalization, while sharp minima often lead to poor generalization~\cite{hochreiter1997flat,keskar2016large,wu2017towards,li2018visualizing}. In our theory, when fitting with structured data, only a few lucky student nodes converge to the teacher, while for other nodes, their fan-out weights shrink towards zero, making them (and their fan-in weights) irrelevant to the final outcome (Thm.~\ref{thm:over-parameterization}), yielding flat minima in which movement along most dimensions  (``unlucky nodes'') results in minimal change in output. On the other hand, sharp minima is related to noisy data (Fig.~\ref{fig:implicit-reg}(d)), in which more student nodes match with the teacher. 


\textbf{Implicit regularization}. On the other hand, the snapping behavior enforces \emph{winner-take-all}: after optimization, a teacher node is fully covered (explained) by a few student nodes, rather than splitting amongst student nodes due to over-parameterization. This explains why the same network, once trained with structured data, can generalize to the test set.

\textbf{Lottery Tickets}. Lottery Tickets~\cite{lottery,lottery-scale,zhou2019deconstructing} is an interesting phenomenon: if we reset ``salient weights'' (trained weights with large magnitude) back to the values before optimization but after initialization, prune other weights (often $> 90\%$ of total weights) and retrain the model, the test performance is the same or better; if we reinitialize salient weights, the test performance is much worse. In our theory, the salient weights are those lucky regions ($E_{j_3}$ and $E_{j_4}$ in Fig.~\ref{fig:lottery-tickets}) that happen to overlap with some teacher nodes after initialization and converge to them in optimization. Therefore, if we reset their weights and prune others away, they can still converge to the same set of teacher nodes, and potentially achieve better performance due to less interference with other irrelevant nodes. However, if we reinitialize them, they are likely to fall into unfavorable regions which cannot cover teacher nodes, and therefore lead to poor performance (Fig.~\ref{fig:lottery-tickets}(c)), just like in the case of under-parameterization. Recently, Supermask~\cite{zhou2019deconstructing} shows that a supermask can be found from winning tickets. If it is applied to initialized weights, the network without training gives much better test performance than chance. This is also consistent with the intuitive picture in Fig.~\ref{fig:lottery-tickets}(b). 

\begin{figure}
    \centering
    \includegraphics[width=0.9\textwidth]{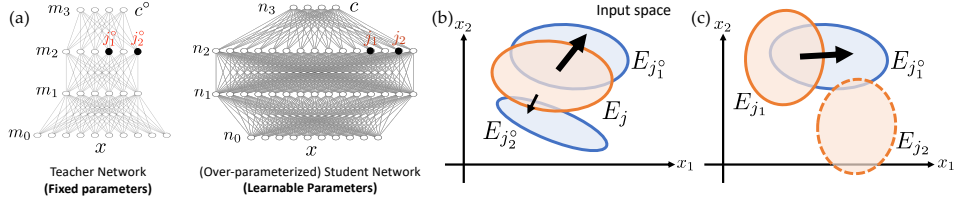}
    \vspace{-0.1in}
    \caption{\small{\textbf{(a)} Teacher-Student Setting. For each node $j$, the activation region is $E_j = \{x: f_j(x) > 0\}$. \textbf{(b)} node $j$ initialized to overlap substantially with a teacher node $\t{j}_1$ converges faster towards $\t{j}_1$ (Thm.~\ref{thm:close-convergence})}. 
    \textbf{(c)} Student nodes initialized to be close to teacher converges to them, while the fan-out weights of other irrelevant student nodes goes to zero. (Thm.~\ref{thm:over-parameterization}).} 
    \label{fig:intro}
\end{figure}

\begin{figure}[t]
    \centering
    \includegraphics[width=\textwidth]{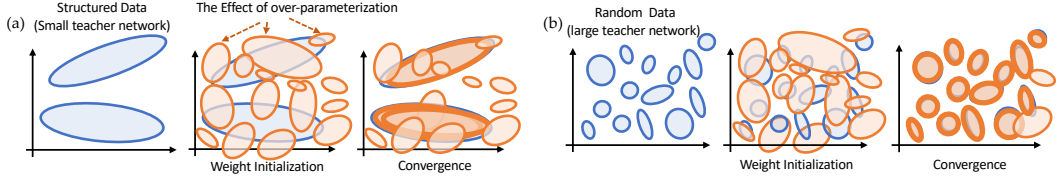}
    \vspace{-0.05in}
    \caption{\small{Explanation of implicit regularization. Blue are activation regions of teacher nodes, while orange are students'.
    \textbf{(a)} When the data labels are structured, the underlying teacher network is small and each layer has few nodes. Over-parameterization (lots of red regions) covers them all. Moreover, those student nodes that heavily overlap with the teacher nodes converge faster (Thm.~\ref{thm:close-convergence}), yield good generalization performance.  
    \textbf{(b)} If a dataset contains random labels, the underlying teacher network that can fit to it has a lot of nodes. Over-parameterization can still handle them and achieves zero training error.}}
    \label{fig:implicit-reg}
    \vspace{-0.05in}
\end{figure}

\begin{figure}[t]
    \centering
    \includegraphics[width=0.8\textwidth]{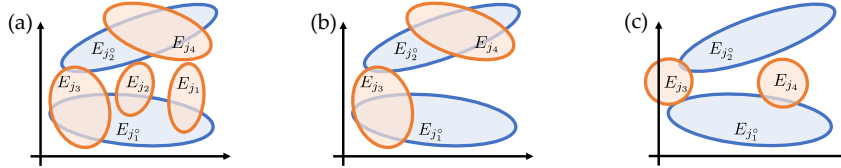}
    \vspace{-0.1in}
    \caption{\small{Explanation of lottery ticket phenomenon. \textbf{(a)} A successful training with over-parameterization (2 filters in the teacher network and 4 filters in the student network). Node $j_3$ and $j_4$ are lucky draws with strong overlap with two teacher node $\t{j}_1$ and $\t{j}_2$, and thus converges with high weight magnitude. \textbf{(b)} Lottery ticket phenomenon: initialize node $j_3$ and $j_4$ with the same initial weight, clamp the weight of $j_1$ and $j_2$ to zero, and retrain the model, the test performance becomes better since $j_3$ and $j_4$ still converge to their teacher node, respectively. \textbf{(c)} If we reinitialize node $j_3$ and $j_4$, it is highly likely that they are not overlapping with teacher node $\t{j}_1$ and $\t{j}_2$ so the performance is not good. }}
    \label{fig:lottery-tickets}
\end{figure}

\vspace{-0.1in}
\section{Mathematical Framework}
\vspace{-0.1in}
\label{sec:framework}

\textbf{Notation}. Consider a student network and its associated teacher network (Fig.~\ref{fig:intro}(a)). Denote the input as $x$. For each node $j$, denote $f_j(x)$ as the activation, $f'_j(x)$ as the ReLU gating (for the top-layer, $f_c'(x)$ and $f_{c^\circ}'(x)$ are always $1$), and $g_j(x)$ as the backpropagated gradient, all as functions of $x$. We use the superscript $\t{}$ to represent a teacher node (e.g., $\t{j}$). Therefore, $g_{\t{j}}$ never appears as teacher nodes are not updated. We use $w_{jk}$ to represent weight between node $j$ and $k$ in the student network. Similarly, $w^*_{\t{j}\t{k}}$ represents the weight between node $\t{j}$ and $\t{k}$ in the teacher network. 

We focus on multi-layered ReLU networks. We use the following equality extensively: $\sigma(x) = \sigma'(x)x$. For ReLU node $j$, we use $E_j \equiv \{x: f_j(x) > 0\}$ as the activation region of node $j$.

\textbf{Teacher network versus Dataset}. The reason why we formulate the problem using teacher network rather than a dataset is the following: (1) It leads to a nice and symmetric formulation for multi-layered ReLU networks (Thm.~\ref{thm:same-layer-relationship}). (2) A teacher network corresponds to an infinite size dataset, which separates the finite sample issues from induction bias in the dataset, which corresponds to the structure of teacher network. 
(3)	If student weights can be shown to converge to teacher ones, generalization bound can naturally follow for the student.  
(4)	The label complexity of data generated from a teacher is automatically reduced, which could lead to better generalization bound. On the other hand, a bound for arbitrary function class can be hard.  

\textbf{Objective}. We assume that both the teacher and the student output probabilities over $C$ classes. We use the output of teacher as the input of the student. At the top layer, each node $c$ in the student corresponds to each node $\t{c}$ in the teacher. Therefore, the objective is:
\begin{equation}
    \min_{\vw} J(\vw) = \frac{1}{2}\ee2{x}{\|f_c(x) - f_{\t{c}}(x)\|^2} \label{eq:loss}
\end{equation}

By the backpropagation rule, we know that for each sample $x$, the (negative) gradient $g_c(x) \equiv \partial J/\partial f_c = f_{\t{c}}(x) - f_c(x)$. The gradient gets backpropagated until the first layer is reached. 

Note that here, the gradient $g_c(x)$ sent to node $c$ is \emph{correlated} with the activation of the corresponding teacher node $f_{\t{c}}(x)$ and other student nodes at the same layer. Intuitively, this means that the gradient ``pushes'' the student node $c$ to align with class $\t{c}$ of the teacher. If so, then the student learns the corresponding class well. A natural question arises: 

\begin{center}
\emph{Are student nodes at intermediate layers correlated with teacher nodes at the same layers?}
\end{center}

One might wonder this is hard since the student's intermediate layer receives no \emph{direct supervision} from the corresponding teacher layer, but relies only on backpropagated gradient. Surprisingly, the following theorem shows that it is possible for every intermediate layer:

\begin{theorem}[Recursive Gradient Rule]
\label{thm:same-layer-relationship}
    If all nodes $j$ at layer $l$ satisfies Eqn.~\ref{eq:compatibility} 
    \begin{equation}
    g_j(x) = f'_j(x)\left[\sum_{\t{j}} \beta^*_{j\t{j}}(x) f_{\t{j}}(x) - \sum_{j'} \beta_{jj'}(x) f_{j'}(x) \right], \label{eq:compatibility}
    \end{equation}
    then all nodes $k$ at layer $l - 1$ also satisfies Eqn.~\ref{eq:compatibility} with $\beta^*_{k\t{k}}(x)$ and $\beta_{kk'}(x)$ defined recursively from top to bottom-layer as follows: 
    \begin{equation}
    \beta^*_{k\t{k}}(x) \equiv \sum_{j\t{j}} w_{jk} f'_j(x)\beta^*_{j\t{j}}(x) f'_{\t{j}}(x) w^*_{\t{j}\t{k}},\quad     \beta_{kk'}(x) \equiv \sum_{jj'} w_{jk} f'_j(x)\beta_{jj'}(x) f'_{j'}(x) w_{j'k'}
    \end{equation}
    And for the base case where node $c$ and $c^\circ$ are at the top-most layer, $\beta^*_{c\t{c}}(x) = \beta_{c\t{c}}(x) = \delta(c-\t{c})$ (i.e., $1$ when $c$ corresponds to $c^\circ$, and $0$ otherwise). 
\end{theorem}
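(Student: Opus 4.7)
The plan is to prove the theorem by downward induction from the top layer, using just two ingredients: the standard backpropagation recursion and the ReLU identity $\sigma(x)=\sigma'(x)x$. The base case at the top layer is immediate: $g_c(x) = f_{\t{c}}(x) - f_c(x)$ (the negative gradient of the squared loss with respect to $f_c$), and since $f'_c \equiv 1$ at the output, this has exactly the form of Eqn.~\ref{eq:compatibility} with $\beta^*_{c\t{c}} = \beta_{cc'} = \delta(c-\t{c})$. Interpreting $g_j$ as the gradient with respect to the pre-activation of node $j$, backpropagation gives the one-step recursion
\begin{equation}
g_k(x) \;=\; f'_k(x) \sum_{j} w_{jk}\, g_j(x), \nonumber
\end{equation}
where the sum is over all nodes $j$ in layer $l$ feeding from $k$ in layer $l-1$. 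The outer factor $f'_k(x)$ is produced by backprop itself and is precisely what the statement of the theorem requires.

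Next, I substitute the inductive hypothesis (Eqn.~\ref{eq:compatibility} at layer $l$) into this recursion, obtaining
\begin{equation}
g_k(x) \;=\; f'_k(x) \sum_{j} w_{jk}\, f'_j(x)\!\left[\sum_{\t{j}} \beta^*_{j\t{j}}(x) f_{\t{j}}(x) - \sum_{j'} \beta_{jj'}(x) f_{j'}(x)\right]. \nonumber
\end{equation}
To expose the activations at layer $l-1$, I apply the ReLU identity to both the teacher activations and the student activations at layer $l$, namely $f_{\t{j}}(x) = f'_{\t{j}}(x)\sum_{\t{k}} w^*_{\t{j}\t{k}} f_{\t{k}}(x)$ and $f_{j'}(x) = f'_{j'}(x)\sum_{k'} w_{j'k'} f_{k'}(x)$. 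This step is the conceptual heart of the argument: it turns a sum of activations at layer $l$ into a linear combination of activations at layer $l-1$, weighted by gating functions and the corresponding weights.

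After this substitution I reorder the sums, pulling the $f_{\t{k}}(x)$ and $f_{k'}(x)$ outside and collecting everything else into coefficients. The coefficient of $f_{\t{k}}(x)$ becomes $\sum_{j\t{j}} w_{jk} f'_j(x)\beta^*_{j\t{j}}(x) f'_{\t{j}}(x) w^*_{\t{j}\t{k}}$, which by definition is $\beta^*_{k\t{k}}(x)$; the coefficient of $f_{k'}(x)$ becomes $\sum_{jj'} w_{jk} f'_j(x)\beta_{jj'}(x) f'_{j'}(x) w_{j'k'} = \beta_{kk'}(x)$. Thus $g_k(x)$ has precisely the form of Eqn.~\ref{eq:compatibility} at layer $l-1$ with the claimed recursions, completing the induction step.

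The only real subtlety is bookkeeping: the teacher sum and the student sum propagate through exactly the same algebraic manipulation but using different weights (\,$w^*_{\t{j}\t{k}}$ versus $w_{j'k'}$\,) and different gatings, and one must keep their indices cleanly separated to recognize the recursive definitions of $\beta^*$ and $\beta$ on the nose. Beyond that the argument is purely mechanical, and no assumptions beyond the definition of ReLU and the squared-loss output gradient are used.
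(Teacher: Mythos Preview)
Your proof is correct and follows essentially the same approach as the paper: apply the backpropagation recursion $g_k(x)=f'_k(x)\sum_j w_{jk}g_j(x)$, substitute the inductive hypothesis, expand the layer-$l$ activations via the ReLU identity $f_{\t{j}}=f'_{\t{j}}\sum_{\t{k}}w^*_{\t{j}\t{k}}f_{\t{k}}$ (and similarly for student), and regroup. If anything, your write-up is slightly more complete, as you state the base case and treat the student term explicitly, whereas the paper only writes out the teacher half and says ``similarly'' for $\beta_{kk'}$.
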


Note that Theorem~\ref{thm:same-layer-relationship} applies to arbitrarily deep ReLU networks and allows different number of nodes for the teacher and student. The role played by ReLU activation is to make the expression of $\beta$ concise, otherwise $\beta_{kk^\circ}$ and $\beta^*_{kk^\circ}$ can take a very complicated (and asymmetric) form.  

In particular, we consider the \emph{over-parameterization} setting: the number of nodes on the student side is much larger (e.g., 5-10x) than the number of nodes on the teacher side. Using Theorem~\ref{thm:same-layer-relationship}, we discover a novel and concise form of gradient update rule:

\label{sec:matrix-formulation}
\begin{assumption}[Separation of Expectations]
\label{assumption:separation}
\begin{eqnarray}
    \ee2{x}{\beta^*_{j\t{j}}(x)f'_j(x)f'_{\t{j}}(x)f_k(x) f_{\t{k}}(x)} &=& \ee2{x}{\beta^*_{j\t{j}}(x)}\ee2{x}{f'_j(x)f'_{\t{j}}(x)}\ee2{x}{f_k(x) f_{\t{k}}(x)} \\
    \ee2{x}{\beta_{jj'}(x)f'_j(x)f'_{j'}(x)f_k(x) f_{k'}(x)} &=& \ee2{x}{\beta_{jj'}(x)}\ee2{x}{f'_j(x)f'_{j'}(x)}\ee2{x}{f_k(x) f_{k'}(x)}
\end{eqnarray}
\end{assumption}

\begin{theorem}
\label{thm:matrix-formulation}
If Assumption~\ref{assumption:separation} holds, the gradient dynamics of deep ReLU networks with objective (Eqn.~\ref{eq:loss}) is: 
\begin{equation}
    \dot W_l = L^*_lW^*_lH^*_{l+1} - L_lW_lH_{l+1} \label{eq:weight-update-matrix}
\end{equation}
\end{theorem}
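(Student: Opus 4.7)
The plan is to differentiate the loss coordinate-wise, expand the backpropagated gradient using Theorem~\ref{thm:same-layer-relationship}, expose pairwise activation products at the layer one step below $l$, apply Assumption~\ref{assumption:separation} to factor the expectations, and finally recognize the resulting double sum as a matrix product.

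First I would write the gradient of Eqn.~\ref{eq:loss} with respect to a single weight $w_{jk}$, where $j$ sits at layer $l$ and $k$ at the layer below (call it $l{+}1$). A one-line chain rule gives $\dot w_{jk}=\mathbb{E}_x[g_j(x)f_k(x)]$, and substituting the recursive form of $g_j$ from Theorem~\ref{thm:same-layer-relationship} yields $\dot w_{jk}=\mathbb{E}_x\bigl[f'_j(\sum_{\t{j}}\beta^*_{j\t{j}}f_{\t{j}}-\sum_{j'}\beta_{jj'}f_{j'})f_k\bigr]$. The key algebraic step is then to use the ReLU identity $\sigma(z)=\sigma'(z)z$ to rewrite $f_{\t{j}}=f'_{\t{j}}\sum_{\t{k}}w^*_{\t{j}\t{k}}f_{\t{k}}$ and $f_{j'}=f'_{j'}\sum_{k'}w_{j'k'}f_{k'}$. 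This pushes each layer-$l$ activation down into a weighted sum of layer-$(l{+}1)$ activations, so every summand becomes the product of a $\beta$ factor, a pair of gating terms $f'_jf'_{\t{j}}$ (resp.\ $f'_jf'_{j'}$), and a cross-layer activation product $f_kf_{\t{k}}$ (resp.\ $f_kf_{k'}$), which is exactly the shape required by Assumption~\ref{assumption:separation}.

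Applying the assumption summand by summand then factors each expectation into three pieces. I would next define $(L^*_l)_{j\t{j}}:=\mathbb{E}[\beta^*_{j\t{j}}]\,\mathbb{E}[f'_jf'_{\t{j}}]$, and analogously $(L_l)_{jj'}$ for the student-self term, and set $(H^*_{l+1})_{\t{k}k}:=\mathbb{E}[f_kf_{\t{k}}]$ and $(H_{l+1})_{k'k}:=\mathbb{E}[f_kf_{k'}]$. Collecting terms, the student-self sum becomes $\sum_{j',k'}(L_l)_{jj'}(W_l)_{j'k'}(H_{l+1})_{k'k}=(L_lW_lH_{l+1})_{jk}$ and the teacher sum becomes $(L^*_lW^*_lH^*_{l+1})_{jk}$; subtracting yields Eqn.~\ref{eq:weight-update-matrix}.

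The only real obstacle is bookkeeping: four index families (student and teacher at layers $l$ and $l{+}1$) appear simultaneously, and one must verify that the factorization produced by Assumption~\ref{assumption:separation} lumps the gating pair into $L^*_l$ / $L_l$ and the activation pair into $H^*_{l+1}$ / $H_{l+1}$, with no leftover $\beta$ or $f'$ tangled with a cross-layer activation. This alignment is not accidental: the $f'_j$ prefactor that sits in front of the recursive formula of Theorem~\ref{thm:same-layer-relationship} is precisely what pairs with the $f'_{\t{j}}$ (resp.\ $f'_{j'}$) emitted by the ReLU identity in the previous step, which is why the dynamics collapse to a clean triple matrix product rather than to a messier tensor contraction.
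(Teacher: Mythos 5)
Your derivation follows essentially the same route as the paper's proof: write $\dot w_{jk}=\mathbb{E}_x[g_j(x)f_k(x)]$, substitute the recursive gradient form from Theorem~\ref{thm:same-layer-relationship}, expand $f_{\t{j}}$ and $f_{j'}$ via $\sigma(z)=\sigma'(z)z$ to expose the five-fold products $f'_j\beta^*_{j\t{j}}f'_{\t{j}}f_kf_{\t{k}}$, factor them with Assumption~\ref{assumption:separation}, and collect the double sum into a triple matrix product. The one bookkeeping slip is that your labels are swapped relative to the paper's conventions: the paper puts the lower-layer activation correlations $\mathbb{E}[f_kf_{k'}]$ into $L_l$ and the top-down modulation $\mathbb{E}[\beta_{jj'}]\,\mathbb{E}[f'_jf'_{j'}]$ into $H_{l+1}$ (with $W_l$ having filters as columns), so your final identity is the transpose of Eqn.~\ref{eq:weight-update-matrix} with $L$ and $H$ interchanged rather than the stated equation itself.
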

Here we explain the notations. $W^*_l = \left[\vw^*_1, \ldots, \vw^*_{m_l}\right]$ is $m_l$ teacher weights, $\vbeta_{l+1}^* = \ee2{x}{\beta_{jj^\circ}(x)}$, $d^*_{jj^\circ} = \ee2{x}{f'_j(x)f'_{j^\circ}(x)}$ and $D_l^* = [d^*_{jj^\circ}]$, $H_{l+1}^* = [h_{jj^\circ}] = \vbeta_{l+1}^* \circ D_l$, $l^*_{jj^\circ} = \ee2{x}{f_j(x) f_{\t{j}}(x)}$ and $L^*_l = [l^*_{jj^\circ}]$. We can define similar notations for $W$ (which has $n_l$ columns/filters), $\boldsymbol{\beta}$, $D$, $H$ and $L$ (Fig.~\ref{fig:overlaps}(c)). At the lowest layer $l=0$, $L_0 = L_0^*$, at the highest layer $l = l_{\max} - 1$ where there is no ReLU, we have $\vbeta_{l_{\max}} = \vbeta^*_{l_{\max}} = H_{l_{\max}} = H^*_{l_{\max}} = I$ due to Eqn.~\ref{eq:loss}. According to network structure, $\vbeta_{l+1}$ and $\vbeta^*_{l+1}$ only depends on weights $W_{l+1}, \ldots, W_{l_{\max} - 1}$, while $L_l$ and $L^*_l$ only depend on $W_{0}, \ldots, W_{l-1}$. 

\def\bninput#1{f^{(#1)}}
\def\bnzeromean#1{\hat f^{(#1)}}
\def\bnstandard#1{\tilde f^{(#1)}}
\def\bnoutput#1{\bar f^{(#1)}}

\def\bnvinput{\vf}
\def\bnvzeromean{\hat \vf}
\def\bnvstandard{\tilde \vf}
\def\bnvoutput{\bar \vf}
\def\vzero{\mathbf{0}}
\def\vone{\mathbf{1}}

\def\ch{\mathrm{ch}}
\def\pa{\mathrm{pa}}

\vspace{-0.1in}
\section{Analysis on the Dynamics}
\vspace{-0.1in}
\def\proj#1{P^\perp_{\vw_{#1}}}
\def\projs#1{P^\perp_{\vw^*_{#1}}}

\begin{figure}
    \centering
    \includegraphics[width=\textwidth]{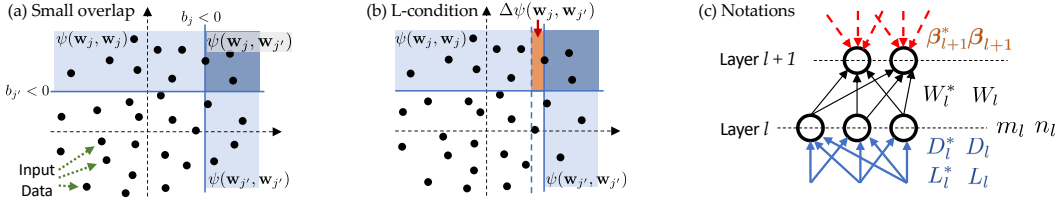}
    \caption{\small{\textbf{(a)} Small overlaps between node activations. Figure drawn in the space spanned by the activations of last layer so all decision boundaries are linear. \textbf{(b)} Lipschitz condition (Assumption~\ref{assumption:L-condition}). \textbf{(c)} Notation in Thm.~\ref{thm:matrix-formulation}.}}
    \vspace{-0.1in}
    \label{fig:overlaps}
\end{figure}

In the following, we will use Eqn.~\ref{eq:weight-update-matrix} to analyze the dynamics of the multi-layer ReLU networks. For convenience, we first define the two functions $\psi_l$ and $\psi_d$ ($\sigma$ is the ReLU function):
\begin{equation}
    \psi_l(\vw, \vw') = \ee2{x}{\sigma(\vw^Tx)\sigma({\vw'}^Tx)}, \quad\psi_d(\vw, \vw') = \ee2{x}{\mathbb{I}(\vw^Tx)\mathbb{I}({\vw'}^Tx)}. 
\end{equation}
We assume these two functions have the following property \yuandong{We should be able to prove them for ReLU later}. 
\begin{assumption}[Lipschitz condition]
\label{assumption:L-condition}
There exists $K_d$ and $K_l$ so that:
\begin{equation}
    \norm{\psi_i(\vw, \vw_1) - \psi_i(\vw, \vw_2)} \le K_i\psi_i(\vw, \vw_1)\norm{\vw_1 - \vw_2}, \quad i \in \{d, l\}
\end{equation}
\end{assumption}

Using this, we know that $d_{jj'} = \psi_d(\vw_j, \vw_{j'})$, $d^*_{jj'} = \psi_d(\vw_j, \vw^*_{j'})$, and so on. For brevity, denote $d^{**}_{jj'} = \psi_d(\vw^*_j, \vw^*_{j'})$ (when notation $j^\circ_1$ is heavy) and so on. We impose the following assumption: 
\begin{assumption}[Small Overlap between teacher nodes]
\label{assumption:weak-interaction}
There exists $\epsilon_l \ll 1$ and $\epsilon_d \ll 1$ so that:
\begin{equation}
d^{**}_{j_1j_2} \le \epsilon_d d^{**}_{j_1j_1} 
\ (\mathrm{or\ } \epsilon_d d^{**}_{j_2j_2}), \quad
l^{**}_{j_1j_2} \le \epsilon_l l^{**}_{j_1j_1} 
\ (\mathrm{or\ } \epsilon_l l^{**}_{j_2j_2}), \quad  \mathrm{for\ } j_1\neq j_2
\end{equation}
\end{assumption}
Intuitively, this means that the probability of the simultaneous activation of two teacher nodes $j_1$ and $j_2$ is small. If we have sufficient training data to cover the input space, then a sufficient condition for Assumption~\ref{assumption:weak-interaction} to happen is that the teacher has negative bias, which means that they \emph{cut corners} in the space spanned by the node activations of the lower layer (Fig.~\ref{fig:overlaps}a). We have empirically verified that the majority of biases in BatchNorm layers (after the data are whitened) are negative in VGG11/16 trained on ImageNet (Sec.~\ref{sec:check-assumption}). 

\subsection{Effects of BatchNorm}
\vspace{-0.1in}
Batch Normalization~\cite{batchnorm} has been extensively used to speed up the training, reduce the tuning efforts and improve the test performance of neural networks. Here we use an interesting property of  BatchNorm: the total ``energy'' of the incoming weights of each node $j$ is conserved over training iterations:

\begin{theorem}[Conserved Quantity in Batch Normalization]
\label{thm:conserved-bn}
For Linear $\rightarrow$ ReLU $\rightarrow$ BN or Linear $\rightarrow$ BN $\rightarrow$ ReLU configuration, $\norm{\vw_j}$ of a filter $j$ before BN remains constant in training.
\ifwithappendix (Fig.~\ref{fig:batchnorm-configuration}).
\fi
\end{theorem}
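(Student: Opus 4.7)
The plan is to exploit the positive-scale invariance that BatchNorm grants to its pre-normalization input. Write the filter's pre-BN output for Linear $\to$ BN $\to$ ReLU as $h(\vx;\vw_j)=\vw_j^\top\vx$, and for Linear $\to$ ReLU $\to$ BN as $h(\vx;\vw_j)=\sigma(\vw_j^\top\vx)$. Because $\sigma$ is positively homogeneous, in both configurations $h(\vx;c\vw_j)=c\,h(\vx;\vw_j)$ for every $c>0$. BatchNorm then forms $(h-\mu_h)/\sqrt{\mathrm{Var}(h)+\varepsilon}$ (times learnable $\gamma$, plus $\beta$), and under the scaling $\vw_j\to c\vw_j$ both $\mu_h$ and $\sqrt{\mathrm{Var}(h)}$ pick up a factor $c$, which cancels in the ratio. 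Hence the post-BN filter output is invariant to replacing $\vw_j$ by $c\vw_j$, and so is everything computed downstream, including the loss $J$ in Eqn.~\ref{eq:loss}. (In the ReLU $\to$ BN order one also uses that ReLU's positive homogeneity commutes with the scaling before the mean/variance are taken.)

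Next I would turn this invariance into an orthogonality statement on the gradient. Differentiating the identity $J(\ldots,c\vw_j,\ldots)=J(\ldots,\vw_j,\ldots)$ in $c$ at $c=1$ and applying the chain rule yields
\begin{equation}
0=\left.\frac{d}{dc}\right|_{c=1}J(c\vw_j)=\vw_j^\top\nabla_{\vw_j}J.
\end{equation}
So at every point of parameter space the gradient with respect to $\vw_j$ is perpendicular to $\vw_j$ itself; this is the BN analogue of the usual Euler identity for positively homogeneous functions.

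Finally, under the gradient flow $\dot\vw_j=-\nabla_{\vw_j}J$ that underlies Eqn.~\ref{eq:weight-update-matrix},
\begin{equation}
\frac{d}{dt}\|\vw_j\|^2 \;=\; 2\vw_j^\top\dot\vw_j \;=\; -2\vw_j^\top\nabla_{\vw_j}J \;=\; 0,
\end{equation}
so $\|\vw_j\|$ is conserved along training trajectories, which is the claim.

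The only subtlety (and the place I would be most careful) is making sure the scale-invariance argument really applies to the quantity that gradient descent differentiates, not merely to the deterministic map. One must use the usual BN convention in which $\mu_h$ and $\mathrm{Var}(h)$ are computed from the same $h$ the gradient flows through (no stop-gradient), and one has to handle the $\varepsilon$ regulariser either by setting $\varepsilon=0$ in the theoretical model or by observing that the statement is exact in the gradient-flow (continuous-time) limit assumed throughout the paper; under discrete SGD the norm drifts only at second order in the step size.
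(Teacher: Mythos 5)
Your proof is correct, but it reaches the key identity by a genuinely different route from the paper. Both arguments ultimately reduce to showing $\vw_j^\top\dot\vw_j=0$, equivalently $\mathbb{E}_x[g^{\mathrm{lin}}_j(x)f^{\mathrm{lin}}_j(x)]=0$ for the pre-BN activation. The paper gets this by explicitly computing the backward pass of BatchNorm (its Lemma on BN backpropagation): the Jacobian $J^{BN}(\vf)$ is $\frac{c_0}{\sigma}P^\perp_{\vf,\vone}$, so the gradient arriving at the pre-BN activation is, as a vector over the batch, orthogonal to that activation; the Linear $\rightarrow$ ReLU $\rightarrow$ BN case is then handled with the identity $\sigma(x)=\sigma'(x)x$. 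You instead observe that the loss is invariant under $\vw_j\mapsto c\vw_j$ for $c>0$ (degree-$0$ positive homogeneity, using ReLU's homogeneity in the ReLU-before-BN order) and differentiate at $c=1$ to get the Euler identity $\vw_j^\top\nabla_{\vw_j}J=0$. This is precisely the symmetry argument of Arora et al., which the paper itself notes contains ``the similar lemma.'' Your route is shorter and more general --- it applies verbatim to any normalization that makes the map scale-invariant in $\vw_j$ (weight norm, layer norm, etc.) without ever computing a Jacobian --- whereas the paper's route yields the stronger structural fact that the backpropagated gradient lies in the orthogonal complement of $\mathrm{span}\{\vf,\vone\}$ over the batch, which it reuses elsewhere. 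Your caveats (no stop-gradient on the batch statistics, $\varepsilon=0$ or the continuous-time limit, and only second-order norm drift under discrete steps) are exactly the right ones and are left implicit in the paper.
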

See Appendix for the proof. The similar lemma is also in~\cite{arora2018theoretical}. This may partially explain why BN has stabilization effect: energy will not leak from one layer to nearby ones. Due to this property, in the following, for convenience we assume $\|\vw_j\|^2 = \|\vw^*_j\|^2 = 1$, and the gradient $\dot \vw_j$ is always orthogonal to the current weight $\vw_j$. Note that on the teacher side we can always push the magnitude component to the upper layer; on the student side, random initialization naturally leads to constant magnitude of weights.  

\subsection{Same number of student nodes as teacher}
\vspace{-0.1in}
We start with a simple case first. Consider that we only analyze layer $l$ without over-parameterization, i.e., $n_l=m_l$. We also assume that $L_l^* = L_l = I$, i.e., the input of that layer $l$ is whitened, and the top-layer signal is uniform, i.e.,  $\boldsymbol{\beta}^*_{l+1} = \boldsymbol{\beta}_{l+1} = \vone\vone^T$ (all $\beta$ entries are 1). Then the following theorem shows that weight recovery could follow (we use $j'$ as $j^\circ$). 

\begin{theorem}
\label{thm:close-convergence}
For dynamics $\dot \vw_j = \proj{j} (W^*\vh_j^* - W\vh_j)$, where $\proj{j} \equiv I - \vw_j\vw^T_j$ is a projection matrix into the orthogonal complement of $\vw_j$. $\vh^*_j$, $\vh_j$ are corresponding $j$-th column in $H^*$ and $H$. Denote $\theta_j = \angle (\vw_j, \vw^*_j)$ and assume $\theta_j \le \theta_0$. If $\gamma = \cos\theta_0 - (m-1)\epsilon_d M_d > 0$, then $\vw_j \rightarrow \vw^*_j$ with the rate $1 - \eta\bar d\gamma$ ($\eta$ is learning rate). Here $\bar d = \left[1 + 2K_d\sin(\theta_0/2)\right]\min_j d^{*0}_{jj}$ and $M_d = (1+K_d)\left[1+2K_d\sin(\theta_0/2)\right]^2 / \cos\frac{\theta_0}{2}$.  
\end{theorem}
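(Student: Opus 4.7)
The plan is to monitor the scalar $c_j(t) = \vw_j^T\vw^*_j = \cos\theta_j$ along the projected flow and derive a differential inequality of the form $\dot c_j \ge \bar d\,\gamma\,(1-c_j)$, from which the discrete contraction factor $1 - \eta\bar d\gamma$ follows by Euler discretization. Because $\|\vw_j\| = \|\vw^*_j\| = 1$ (Thm.~\ref{thm:conserved-bn}) and the projection $\proj{j}$ keeps $\vw_j$ on the unit sphere, the angle evolution is fully captured by $c_j$. Under the simplifying hypotheses $L_l = L_l^* = I$ and $\boldsymbol{\beta}_{l+1} = \boldsymbol{\beta}^*_{l+1} = \vone\vone^T$, one has $H^* = D^*$ and $H = D$, so
\begin{equation*}
\dot c_j = (\vw^*_j)^T\proj{j}\left[\sum_{j^\circ} \vw^*_{j^\circ}\, d^*_{jj^\circ} \;-\; \sum_{j'}\vw_{j'}\, d_{jj'}\right].
\end{equation*}

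Next I would split the right-hand side into a \emph{signal} term coming from the diagonal teacher entry $j^\circ = j$, and an \emph{interference} remainder collecting all off-diagonal teacher and student contributions; the self-student index $j' = j$ drops out automatically because $\proj{j}\vw_j = 0$. Using $(\vw^*_j)^T\proj{j}\vw^*_j = 1 - c_j^2 = \sin^2\theta_j$, the signal contributes $\sin^2\theta_j \cdot d^*_{jj} = (1+c_j)(1-c_j)\,d^*_{jj}$; since $1 + c_j \ge 1 + \cos\theta_0 = 2\cos^2(\theta_0/2)$, this already carries the target factor $(1-c_j)$ together with a strictly positive coefficient growing with $d^*_{jj}$.

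The crux of the proof is the interference bound. For each teacher $j^\circ \ne j$, Assumption~\ref{assumption:L-condition} applied with $\|\vw_j - \vw^*_j\| = 2\sin(\theta_j/2)$ gives $d^*_{jj^\circ} \le [1 + 2K_d\sin(\theta_j/2)]\, d^{**}_{jj^\circ}$, and Assumption~\ref{assumption:weak-interaction} then replaces $d^{**}_{jj^\circ}$ by $\epsilon_d\, d^{**}_{jj}$. For each student $j' \ne j$, two successive Lipschitz steps move $\vw_{j'}$ to $\vw^*_{j'}$ and $\vw_j$ to $\vw^*_j$, producing an extra $(1 + K_d)$ factor and reducing $d_{jj'}$ to the same weak-overlap scale $\epsilon_d\, d^{**}_{jj}$. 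The scalar prefactors $(\vw^*_j)^T\proj{j}\vw^*_{j^\circ}$ and $(\vw^*_j)^T\proj{j}\vw_{j'}$ are bounded uniformly for $\theta_j \le \theta_0$ by unit-norm arguments, with the worst case of order $1/\cos(\theta_0/2)$ after factoring out an $(1-c_j)$ to align with the signal's $(1-c_j)$. Summing over the $m-1$ off-diagonal indices on each side, the accumulated constants collapse into exactly $M_d = (1+K_d)[1+2K_d\sin(\theta_0/2)]^2/\cos(\theta_0/2)$ and the teacher-side baseline $\min_j d^{*0}_{jj}$ absorbs into $\bar d$.

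Combining the two estimates yields $\dot c_j \ge \bar d\,\gamma\,(1 - c_j)$ with $\gamma = \cos\theta_0 - (m-1)\epsilon_d M_d$. Positivity of $\gamma$ forces $c_j$ to be monotone increasing, so the hypothesis $\theta_j \le \theta_0$ is preserved for all $t$ and the bound propagates; Euler-discretization with learning rate $\eta$ delivers the stated contraction rate $1 - \eta\bar d\gamma$. The main technical obstacle, in my view, is precisely the bookkeeping in the third step: each Lipschitz application has a direction (upper vs.\ lower bound) and each geometric prefactor must be bounded carefully so that the constants accumulate to $\bar d$ and $M_d$ in exactly the stated form, without slack that would shrink the effective $\gamma$. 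No single inequality is deep, but the overall estimate must be tight so that $\gamma > 0$ remains achievable in the small-$\epsilon_d$ regime.
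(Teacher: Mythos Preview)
Your plan has a genuine gap in the interference estimate, and it stems from two coupled choices: tracking $c_j = \cos\theta_j$ rather than $\sin\theta_j$, and bounding the teacher and student cross-terms separately rather than as paired differences.

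First, the scalar-prefactor step does not work as stated. For any unit vector $\vy$ one has $(\vw^*_j)^T\proj{j}\vy = (\proj{j}\vw^*_j)^T\vy$, hence $|(\vw^*_j)^T\proj{j}\vy| \le \|\proj{j}\vw^*_j\| = \sin\theta_j$. But $\sin\theta_j/(1-c_j) = \cot(\theta_j/2)$, which is \emph{unbounded} as $\theta_j \to 0$; you cannot factor out $(1-c_j)$ and be left with something of order $1/\cos(\theta_0/2)$. Second, by bounding $d^*_{jj^\circ}$ and $d_{jj'}$ separately at the scale $\epsilon_d d^{**}_{jj}$, your interference is $O(\epsilon_d)\cdot\sin\theta_j$ and does not vanish as all angles shrink. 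The resulting inequality reads $\dot c_j \ge d^*_{jj}\sin\theta_j\big[\sin\theta_j - (m-1)\epsilon_d\cdot\mathrm{const}\big]$, whose bracket turns negative once $\theta_j$ is small enough; monotonicity of $c_j$ then fails and the argument cannot close.

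The paper avoids both issues by tracking $\sin\theta_j = \|\projs{j}\vw_j\|$ and, crucially, by \emph{pairing} the off-diagonal teacher and student terms into $\Delta\vp_{jj'} = \vw^*_{j'}d^*_{jj'} - \vw_{j'}d_{jj'}$. Lemma~\ref{lemma:bottom-bounds} gives $\|\Delta\vp_{jj'}\| \le (1+K_d)d^*_{jj'}\|\delta\vw_{j'}\|$, so the interference carries an extra factor $\|\delta\vw_{j'}\| = 2\sin(\theta_{j'}/2) \le \sin\theta_{j'}/\cos(\theta_0/2)$, which is exactly the origin of the $1/\cos(\theta_0/2)$ in $M_d$. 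Together with the identity $\projs{j}\proj{j}\vw^*_j = -\cos\theta_j\,\projs{j}\vw_j$, this yields the linear coupled recursion $\sin\theta^{t+1}_j \le (1-\eta d^*_{jj}\cos\theta^t_j)\sin\theta^t_j + \eta\epsilon_d M_d d^*_{jj}\sum_{j'\neq j}\sin\theta^t_{j'}$, closed by the uniform induction hypothesis $\sin\theta^t_j \le (1-\eta\bar d\gamma)^{t-1}\sin\theta_0$ for all $j$. In your setup, even if you add the pairing, the interference on $\dot c_j$ scales like $\sin\theta_j\cdot\|\delta\vw_{j'}\| \sim \sqrt{(1-c_j)(1-c_{j'})}$, which is not linear in $(1-c_j)$ and does not produce the decoupled inequality $\dot c_j \ge \bar d\gamma(1-c_j)$ you are aiming for.
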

See Appendix for the proof. Here we list a few remarks:

\textbf{Faster convergence near $\vw^*_j$}. we can see that due to the fact that $h^*_{jj}$ in general becomes larger when $\vw_j \rightarrow \vw^*_j$ (since $\cos\theta_0$ can be close to $1$), we expect a \emph{super-linear} convergence near $\vw^*_j$. This brings about an interesting \emph{winner-take-all} mechanism: if the initial overlap between a student node $j$ and a particular teacher node is large, then the student node will snap to it (Fig.~\ref{fig:intro}(c)). 

\textbf{Importance of projection operator $\proj{j}$}. Intuitively, the projection is needed to remove any ambiguity related to weight scaling, in which the output remains constant if the top-layer weights are multiplied by a constant $\alpha$, while the low-layer weights are divided by $\alpha$. Previous works~\cite{du2017gradient} also uses similar techniques while we justify it with BN. Without $\proj{j}$, convergence can be harder. 

\textbf{Top-down modulation}. Note that here we assume the top-layer signal is uniform, which means that according to $\beta^*_{kk^\circ}$, there is no preference on which student node $k$ corresponds to which teacher node $k^\circ$. If there is a preference (e.g., $\beta^*_{kk^\circ} = \delta(k - k^\circ)$), then from the proof, the cross-term $h^*_{kk^\circ}$ will be suppressed due to $\beta^*_{kk^\circ}$, making convergence easier. As we will see next, such a \emph{top-down modulation} plays an important role for 2-layer and over-parameterization case. We believe that it also plays a similar role for deep networks. 

\subsection{Over-Parameterization and Top-down Modulation in 2-layer Network}
\vspace{-0.1in}
In the over-parameterization case ($n_l > m_l$, e.g., 5-10x), we arrange the variables into two parts: $W = [W_u, W_r]$, where $W_u$ contains $m_l$ columns (same size as $W^*$), while $W_r$ contains $n_l - m_l$ columns. We use $[u]$ (or $u$-set) to specify nodes $1\le j\le m$, and $[r]$ (or $r$-set) for the remaining part. 

In this case, if we want to show ``the main component'' $W_u$ converges to $W^*$, we will meet with one core question: to where will $W_r$ converge, or whether $W_r$ will even converge at all? We need to consider not only the dynamics of the current layer, but also the dynamics of the upper layer. Using a 1-hidden layer over-parameterized ReLU network as an example, Theorem~\ref{thm:over-parameterization} shows that the upper-layer dynamics $\dot V = L^*V^* - LV$ automatically apply \emph{top-down modulation} to suppress the influence of $W_r$, regardless of their convergence. Here $V = \left[\begin{array}{c} V_u \\ V_r \end{array}\right]$, where $V_u$ are the weight components of $u$-set. See Fig.~\ref{fig:over-parameterization}.

\begin{figure}
    \centering
    \includegraphics[width=\textwidth]{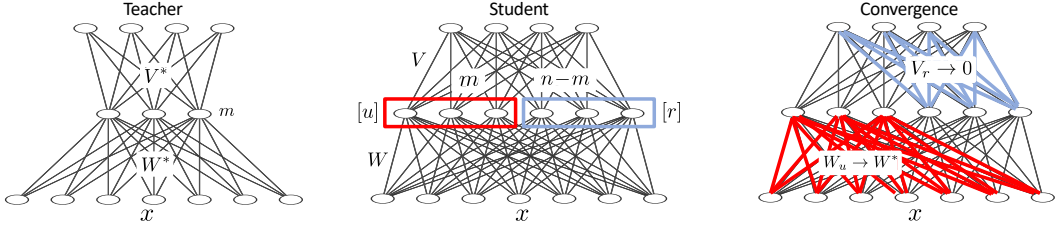}
    \caption{\small{Over-parameterization and top-down modulation. Thm.~\ref{thm:over-parameterization} shows that under certain conditions, the relevant weights $W_u\rightarrow W^*$ and weights connecting to irrelevant student nodes $V_r\rightarrow 0$.}}
    \label{fig:over-parameterization}
    \vspace{-0.2in}
\end{figure}

\begin{theorem}[Over-Parameterization and Top-down Modulation]
\label{thm:over-parameterization}
Consider $\dot W = W^*H^* - WH$ with over-parameterization ($n > m$) and its upper-layer dynamics $\dot V = L^*V^* - LV$. Assume that initial value $W^0_u$ is close to $W^*$: $\theta_j = \angle (\vw_j,  \vw^*_j) \le \theta_0$ for $j \in [u]$. If (1) Assumption~\ref{assumption:weak-interaction} holds for all pairwise combination of columns of $W^*$ \textbf{and} $W^0_r$, and (2) there exists $\gamma = \gamma(\theta_0, m) > 0$ and $\bar\lambda$ 
\ifwithappendix
so that Eqn.~\ref{eq:w-cond} and Eqn.~\ref{eq:v-cond} holds,
\else
that satisfy certain conditions (in Appendix), 
\fi
then $W_u \rightarrow W^*$, $V_u \rightarrow V^*$ and $V_r \rightarrow 0$ with rate $1 - \eta\bar\lambda\gamma$. 
\end{theorem}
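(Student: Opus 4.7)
The plan is to adapt the single-node argument of Theorem~\ref{thm:close-convergence} to the 2-layer over-parameterized case by tracking three coupled Lyapunov quantities: the angular deviations $\theta_j = \angle(\vw_j, \vw^*_j)$ for $j \in [u]$, the matrix deviation $\|V_u - V^*\|_F$, and the magnitude $\|V_r\|_F$. Because the top layer has no ReLU, Theorem~\ref{thm:same-layer-relationship} collapses: $\vbeta$ and $\vbeta^*$ become Gram matrices of the rows of $V$ (and of $V$ with $V^*$), independent of $x$. In particular, the $r$-blocks of $\vbeta$ and $\vbeta^*$ carry an explicit factor of $V_r$, which is exactly the top-down modulation mechanism promised by the theorem.

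First, I would block-decompose $\dot W = W^* H^* - W H$ along the $u/r$ split. For $j \in [u]$, projecting by $\proj{j}$ (legitimate under Theorem~\ref{thm:conserved-bn}), the dominant term has the form $\vbeta^*_{jj}\, d^*_{jj}\, \proj{j} \vw^*_j$ with $\vbeta^*_{jj}\approx \|V^*_j\|^2 > 0$, driving $\vw_j \to \vw^*_j$ in the same super-linear manner as Theorem~\ref{thm:close-convergence}. The cross terms with other $u$-columns of $W^*$ and $W$ are suppressed by Assumption~\ref{assumption:weak-interaction} on the teacher, and the cross terms with $W_r$ carry a factor of $V_u V_r^T$, controlled by $\|V_r\|_F$. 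Assumption~\ref{assumption:L-condition} bounds how $D$ and $D^*$ vary as the angles shrink, yielding a differential inequality for $\max_j \theta_j$ with contraction rate $\gamma(\theta_0, m) > 0$ whenever the first of the two appendix conditions holds.

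Next I would analyze $\dot V = L^* V^* - L V$ in the same block form, giving $\dot V_u = L^*_u V^* - L_{uu} V_u - L_{ur} V_r$ and $\dot V_r = L^*_r V^* - L_{ru} V_u - L_{rr} V_r$. Assumption~\ref{assumption:L-condition} implies that as $\theta_j \to 0$ for $j \in [u]$, both $L_{uu}$ and $L^*_u$ tend to the teacher Gram $L^{**}$, so the first equation becomes $\dot V_u \approx L^{**}(V^* - V_u) + O(\|V_r\|_F)$, linearly contracting $V_u$ to $V^*$. For the $r$-block, hypothesis (1) of the theorem extends Assumption~\ref{assumption:weak-interaction} to pairs involving $W^0_r$, so the forcing term $L^*_r V^*$ is uniformly small and $\dot V_r$ is dominated by $-L_{rr} V_r$, whose minimum eigenvalue is the parameter $\bar\lambda$ in the statement; this gives $V_r \to 0$.

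The main obstacle is closing the bidirectional coupling: the $W$-dynamics depends on $V$ through $\vbeta$, while the $V$-dynamics depends on $W$ through $L$ and $L^*$. I would resolve it with a joint Lyapunov candidate $\Phi(t) = \alpha_1 \max_{j\in[u]} \sin(\theta_j(t)/2) + \alpha_2 \|V_u(t) - V^*\|_F + \alpha_3 \|V_r(t)\|_F$ and prove a drift inequality of the form $\dot \Phi \le -\bar\lambda\gamma\,\Phi + C\,\Phi^2$, with positive weights $\alpha_i$ chosen to balance the cross-coupling coefficients identified above. The two appendix conditions on $\gamma$ and $\bar\lambda$ are precisely what guarantees such $\alpha_i$ exist and $\bar\lambda\gamma > 0$; choosing $\Phi(0)$ small enough (ensured by the near-teacher initialization of $W^0_u$ and by $\|V_r(0)\|_F$ being small at standard initialization) absorbs the $\Phi^2$ term. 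A routine discrete-time analogue with step size $\eta$ then yields the advertised rate $1 - \eta \bar\lambda\gamma$ for $W_u \to W^*$, $V_u \to V^*$, and $V_r \to 0$ simultaneously.
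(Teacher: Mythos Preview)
Your high-level decomposition (angles $\theta_j$ for $j\in[u]$, $\|V_u-V^*\|$, $\|V_r\|$) matches the paper's, and your observation that in the 2-layer case $\vbeta^*_{kk^\circ}=\vv_k^T\vv^*_{k^\circ}$ and $\vbeta_{kk'}=\vv_k^T\vv_{k'}$ is exactly the top-down modulation mechanism the paper uses. However, the proposal has two genuine gaps.

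\textbf{Missing control of the $W_r$ drift.} You invoke hypothesis~(1) (Assumption~\ref{assumption:weak-interaction} for columns of $W^*$ and $W^0_r$) to argue that $L^*_rV^*$ and the $r$-cross terms in $D,D^*$ are small. But these quantities depend on the \emph{current} $W_r$, not on $W^0_r$; nothing in your Lyapunov candidate $\Phi$ tracks how far $W_r$ has drifted, so the separation bounds you rely on may fail after a few steps. The paper's proof addresses this explicitly: it sets $\vw^*_j:=\vw^0_j$ for $j\in[r]$ and proves a separate ``$W_r$-Bound'' showing $\|\vw_j^t-\vw_j^0\|\le C_{d,r}$ for all $t$, obtained by summing the geometric series $\sum_{t'} \|\dot\vw_j^{t'}\|$ (each term carries a factor $\|\vv_j^{t'}\|$, which decays like $(1-\eta\bar l\gamma)^{t'}$). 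This bound is what keeps Assumption~\ref{assumption:weak-interaction} valid for \emph{all} iterations and closes the induction. Your scheme needs an analogous ingredient.

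\textbf{Unwarranted smallness of $\|V_r(0)\|$.} Your drift inequality $\dot\Phi\le -\bar\lambda\gamma\,\Phi + C\Phi^2$ only yields contraction when $\Phi(0)$ is small, and you explicitly assume ``$\|V_r(0)\|_F$ being small at standard initialization''. The theorem, as the paper emphasises right after its statement, makes \emph{no} smallness assumption on $V_r$; only $W_u^0$ must be near $W^*$. The paper achieves this by running per-row linear contractions: for $j\in[r]$, $\dot\vv_j=-l_{jj}\vv_j+(\text{small terms})$, giving $\|\vv_j^{t+1}\|\le(1-\eta\bar l\gamma)\|\vv_j^t\|$ irrespective of $\|\vv_j^0\|$. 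Bundling $\|V_r\|$ into a single scalar Lyapunov function with a quadratic remainder loses this structure. (Relatedly, $\bar\lambda$ in the paper is $\min(\bar d,\bar l)$ built from diagonal entries $d^*_{jj},l^*_{jj}$, not the minimum eigenvalue of $L_{rr}$.)

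In short, the paper's proof is an interlocking discrete induction over four invariants (W-Separation, $W_u$-Contraction, $V$-Contraction, $W_r$-Bound) via three lemmas, rather than a single Lyapunov inequality; the $W_r$-Bound and the unconditional $V_r$ contraction are the pieces your proposal is missing.
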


See Appendix for the proof (and definition of $\bar\lambda$
\ifwithappendix
in Eqn.~\ref{eq:lambda-bar}\fi). The intuition is: if $W_u$ is close to $W^*$ and $W_r$ are far away from them due to Assumption~\ref{assumption:weak-interaction}, the off-diagonal elements of $L$ and $L^*$ are smaller than diagonal ones. This causes $V_u$ to move towards $V^*$ and $V_r$ to move towards zero. When $V_r$ becomes small, so does $\beta_{jj'}$ for $j\in [r]$ or $j'\in[r]$. This in turn suppresses the effect of $W_r$ and accelerates the convergence of $W_u$. $V_r\rightarrow 0$ exponentially so that $W_r$ stays close to its initial locations, and Assumption~\ref{assumption:weak-interaction} holds for all iterations. A few remarks:

\textbf{Flat minima}. Since $V_r \rightarrow 0$, $W_r$ can be changed arbitrarily without affecting the outputs of the neural network. This could explain why there are many flat directions in trained networks, and why many eigenvalues of the Hessian are close to zero~\cite{sagun2016eigenvalues}. 

\textbf{Understanding of pruning methods}. Theorem~\ref{thm:over-parameterization} naturally relates two different unstructured network pruning approaches: pruning small weights in magnitude~\cite{han2015learning, lottery} and pruning weights suggested by Hessian~\cite{lecun1990optimal, hassibi1993optimal}. It also suggests a principled structured pruning method: instead of pruning a filter by checking its weight norm, pruning accordingly to its top-down modulation. 

\textbf{Accelerated convergence and learning rate schedule}. For simplicity, the theorem uses a uniform (and conservative) $\gamma$ throughout the iterations. In practice, $\gamma$ is initially small (due to noise introduced by $r$-set) but will be large after a few iterations when $V_r$ vanishes. Given the same learning rate, this leads to accelerated convergence. At some point, the learning rate $\eta$ becomes too large, leading to fluctuation. In this case, $\eta$ needs to be reduced. 

\textbf{Many-to-one mapping}. Theorem~\ref{thm:over-parameterization} shows that under strict conditions, there is one-to-one correspondence between teacher and student nodes. In general this is not the case. Two students nodes can be both in the vicinity of a teacher node $\vw^*_j$ and converge towards it, until that node is fully explained. We leave it to the future work for rigid mathematical analysis of many-to-one mappings. 

\textbf{Random initialization}. One nice thing about Theorem~\ref{thm:over-parameterization} is that it only requires the initial $\norm{W_u - W^*}$ to be small. In contrast, there is \emph{no} requirement for small $\norm{V_r}$. Therefore, we could expect that with more over-parameterization and random initialization, in each layer $l$, it is more likely to find the $u$-set (of fixed size $m_l$), or the \emph{lucky weights}, so that $W_u$ is quite close to $W^*$. At the same time, we don't need to worry about $\norm{W_r}$ which grows with more over-parameterization. Moreover, random initialization often gives orthogonal weight vectors, which naturally leads to Assumption~\ref{assumption:weak-interaction}. 

\subsection{Extension to Multi-layer ReLU networks}
\vspace{-0.05in}
Using a similar approach, we could extend this analysis to multi-layer cases. We \emph{conjecture} that similar behaviors happen: for each layer, due to over-parameterization, the weights of some \emph{lucky} student nodes are close to the teacher ones. While these converge to the teacher, the final values of others \emph{irrelevant} weights are initialization-dependent. If the irrelevant nodes connect to lucky nodes at the upper-layer, then similar to Thm.~\ref{thm:over-parameterization}, the corresponding fan-out weights converge to zero. On the other hand, if they connect to nodes that are also irrelevant, then these fan-out weights are not-determined and their final values depends on initialization. However, it doesn't matter since these upper-layer irrelevant nodes eventually meet with zero weights if going up recursively, since the top-most output layer has no over-parameterization. We leave a formal analysis to future work.

\begin{figure}[t]
    \centering
    \includegraphics[width=0.95\textwidth]{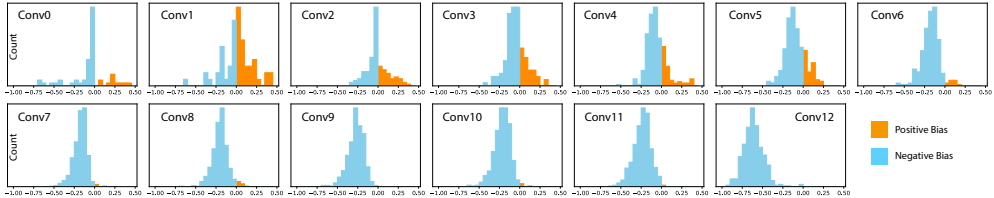}
    \vspace{-0.1in}
    \caption{\small{Distribution of BatchNorm bias in pre-trained VGG16 on ImageNet. Orange/blue are positive/negative biases. \texttt{Conv0} corresponds to the lowest layer (closest to the input). VGG11/13/19 in \ifwithappendix
Fig.~\ref{fig:vgg11-appendix}.
\else
    Appendix.
\fi
 \iffalse \ari{Can we add labels to this plot? e.g. y-axis: Count, x-axis: BN bias value, and then titles for which layer is which?} \yuandong{too small, maybe we only show one for every two layers.}\fi}}
    \label{fig:vgg16}
\end{figure}

\vspace{-0.05in}
\section{Simulations}
\vspace{-0.05in}
\subsection{Checking Assumption~\ref{assumption:weak-interaction}} 
\vspace{-0.05in}
\label{sec:check-assumption}
To make Theorem~\ref{thm:close-convergence} and Theorem~\ref{thm:over-parameterization} work,  we make Assumption~\ref{assumption:weak-interaction} that the activation field of different teacher nodes should be well-separated. To justify this, we analyze the bias of BatchNorm layers after the convolutional layers in pre-trained VGG11/13/16/19. We check the BatchNorm bias $c_1$ as these models use Linear-BatchNorm-ReLU architecture. After BatchNorm first normalizes the input data into zero mean distribution, the BatchNorm bias determines how much data pass the ReLU threshold. If the bias is negative, then a small portion of data pass ReLU gating and Assumption~\ref{assumption:weak-interaction} is likely to hold. From Fig.~\ref{fig:vgg16}, it is quite clear that the majority of BatchNorm bias parameters are negative, in particular for the top layers. 

\subsection{Numerical Experiments of Thm.~\ref{thm:over-parameterization}}
We verify Thm.~\ref{thm:over-parameterization} by checking whether $V_r$ moves close to $0$ under different initialization. We use a network with one hidden layer. The teacher network is 10-20-30, while the student network has more nodes in the hidden layers. Input data are Gaussian noise. We initialize the student networks so that the first $20$ nodes are close to the teacher. Specifically, we first create matrices $W_\epsilon$ and $V_\epsilon$ by first filling with i.i.d Gaussian noise, and then normalizing their columns to $1$. Then the initial value of student $W$ is $W^0_u = \mathrm{column\_normalize}(p_WW^* + W_\epsilon)$, where $p_W$ is a factor controlling how close $W^0_u$ is to $W^*$. For $W_r$ we initialize with noise. Similarly we initialize $V_u$ with a factor $p_V$. The larger $p_W$ and $p_V$, the close the initialization $W^0_u$ and $V^0_u$ to the ground truth values. 

Fig.~\ref{fig:over-param-check} shows the behavior over different iterations. All experiments are repeated 32 times with different random seeds, and (mean$\pm$ std) are reported. We can see that a close initialization leads to faster (and low variance) convergence of $V_r$ to small values. In particular, it is important to have $W^0_u$ close to $W^*$ (large $p_W$), which leads to a clear separation between row norms of $V_u$ and $V_r$, even if they are close to each other at the beginning of training. Having $V^0_u$ close to $V^*$ makes the initial gap larger and also helps convergence. On the other hand, if $p_W$ is small, then even if $p_V$ is large, the gap between row norms of $V_u$ and $V_r$ only shifts but doesn't expand over iterations. 

\begin{figure}
    \centering
    \includegraphics[width=\textwidth]{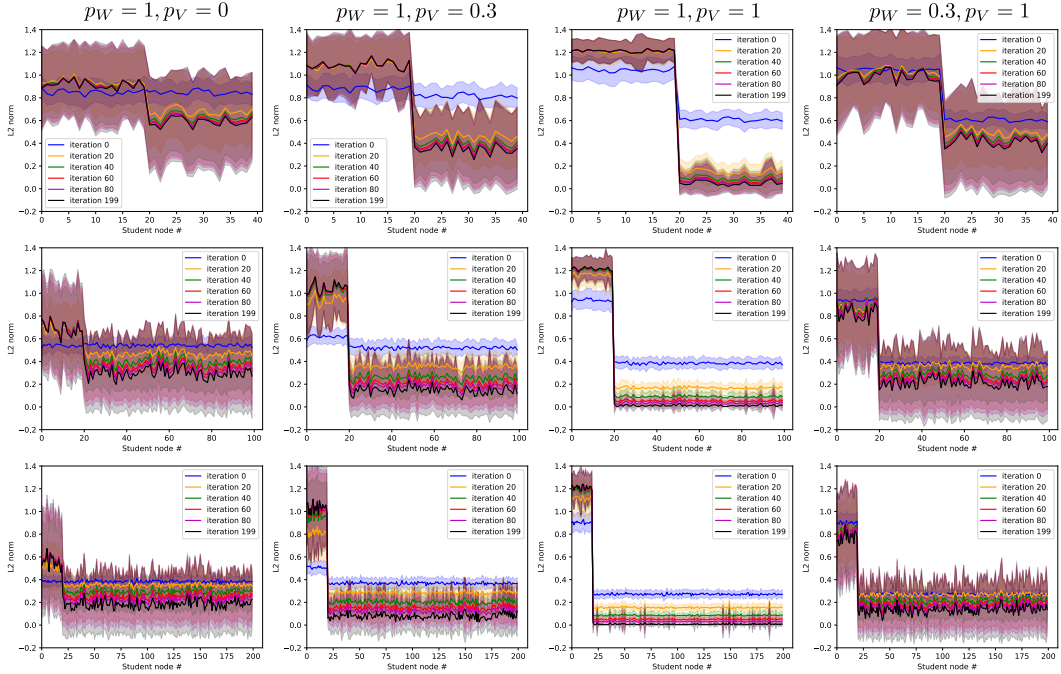}
    \caption{Numerical verification of Thm.~\ref{thm:over-parameterization}. Rows are $2\times$, $5\times$ and $10\times$ over-parameterization. Columns are different initializations of student networks. All figures show $\norm{\vv_j}$, i.e., norm of each row of $V$. }
    \label{fig:over-param-check}
\end{figure}

\section{Experiments}

\def\syth{\texttt{GAUS}}
\def\cifar{\texttt{CIFAR-10}}
\subsection{Experiment Setup}
We evaluate both the fully connected (FC) and ConvNet setting. For FC, we use a ReLU teacher network of size 50-75-100-125. For ConvNet, we use a teacher with channel size 64-64-64-64. The student networks have the same depth but with $10\times$ nodes/channels at each layer, such that they are substnatially over-parameterized. When BatchNorm is added, it is added after ReLU. 

We use random i.i.d Gaussian inputs with mean 0 and std $10$ (abbreviated as \syth) and {\cifar} as our dataset in the experiments. {\syth} generates infinite number of samples while {\cifar} is a finite dataset. For {\syth}, we use a random teacher network as the label provider (with $100$ classes). To make sure the weights of the teacher are weakly overlapped, we sample each entry of $\vw^*_j$ from $[-0.5, -0.25, 0, 0.25, 0.5]$, making sure they are non-zero and mutually different within the same layer, and sample biases from $U[-0.5, 0.5]$. In the FC case, the data dimension is 20 while in the ConvNet case it is $16\times 16$. For {\cifar} we use a pre-trained teacher network with BatchNorm. In the FC case, it has an accuracy of $54.95\%$; for ConvNet, the accuracy is $86.4\%$. We repeat 5 times for all experiments, with different random seed and report min/max values. 

Two metrics are used to check our prediction that some lucky student nodes converge to the teacher:

\textbf{Normalized correlation $\bar\rho$}. We compute normalized correlation (or cosine similarity) $\rho$ between teacher and student activations\footnote{For $\vf_j = [f_j(x_1), \ldots, f_j(x_n)]$ and $\vf_{j^\circ}$, $\rho_{jj^\circ} = \tilde\vf_j^T\tilde\vf_{j^\circ} \in [-1, 1]$, where $\tilde\vf_j = (\vf_j - \mathrm{mean}(\vf_j)) / \mathrm{std}(\vf_j)$.} evaluated on a validation set. At each layer, we average the best correlation over teacher nodes: $\bar\rho = \mathrm{mean}_{j^\circ} \max_j \rho_{jj^\circ}$, where $\rho_{jj^\circ}$ is computed for each teacher and student pairs $(j, j^\circ)$. $\bar\rho \approx 1$ means that most teacher nodes are covered by at least one student. 

\textbf{Mean Rank $\bar r$}. After training, each teacher node $j^\circ$ has the most correlated student node $j$. We check the correlation rank of $j$, normalized to $[0,1]$ ($0$=rank first), back at initialization and at different epochs, and average them over teacher nodes to yield mean rank $\bar r$. Small $\bar r$ means that student nodes that initially correlate well with the teacher keeps the lead toward the end of training. 

\begin{figure}
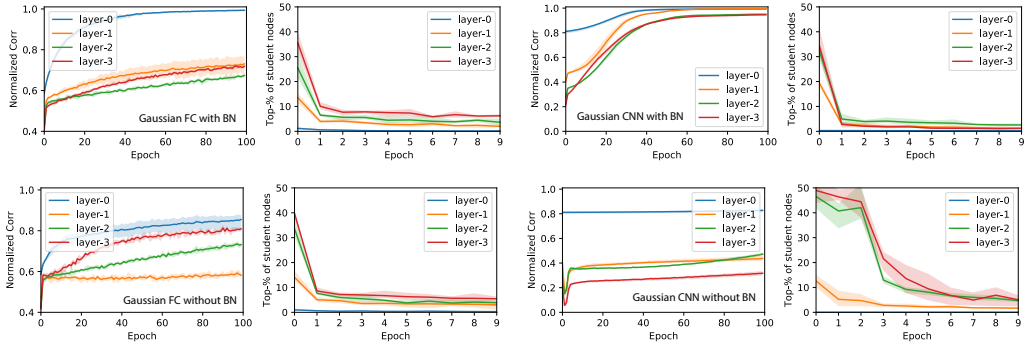

    \centering
    \includegraphics[width=0.49\textwidth]{523_100/gaussian_fc_has_bn_regen.pdf}
    \includegraphics[width=0.49\textwidth]{523_100/gaussian_cnn_has_bn_regen.pdf}
    \includegraphics[width=0.49\textwidth]{523_100/gaussian_fc_no_bn_regen.pdf}
    \includegraphics[width=0.49\textwidth]{523_100/gaussian_cnn_no_bn_regen.pdf}
    \vspace{-0.1in}
    \caption{\small{Correlation $\bar\rho$ and mean rank $\bar r$ over training on \syth{}. $\bar\rho$ steadily grows and $\bar r$ quickly improves over time. \texttt{Layer-0} (the lowest layer that is closest to the input) shows best match with teacher nodes and best mean rank. BatchNorm helps achieve both better correlation and lower $\bar r$, in particular for the CNN case.}}
    \label{fig:exp-gaussian}
\end{figure}

\begin{figure}
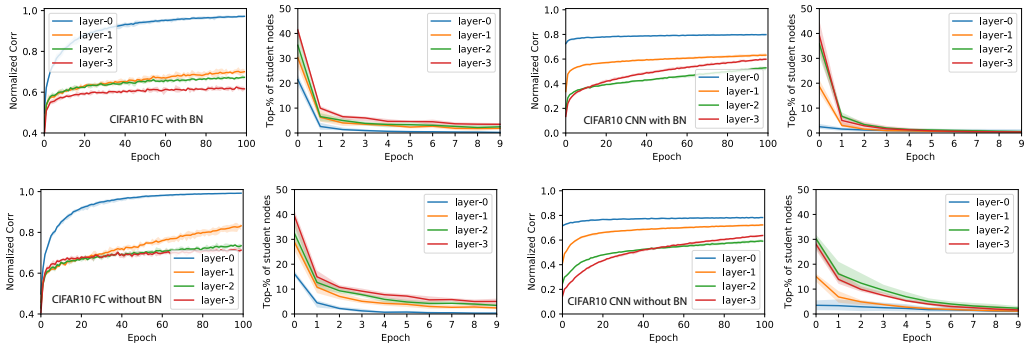

    \centering
    \includegraphics[width=0.49\textwidth]{523_100/cifar10_fc_has_bn.pdf}
    \includegraphics[width=0.49\textwidth]{523_100/cifar10_cnn_has_bn.pdf}
    \includegraphics[width=0.49\textwidth]{523_100/cifar10_fc_no_bn.pdf}
    \includegraphics[width=0.49\textwidth]{523_100/cifar10_cnn_no_bn.pdf}    
    \vspace{-0.1in}
    \caption{\small{Same experiment setting as in Fig.~\ref{fig:exp-gaussian} on \cifar{}. BatchNorm helps achieve lower $\bar r$.}}
    \label{fig:exp-cifar10}
\end{figure}

\begin{figure}
    \centering
    \includegraphics[width=\textwidth]{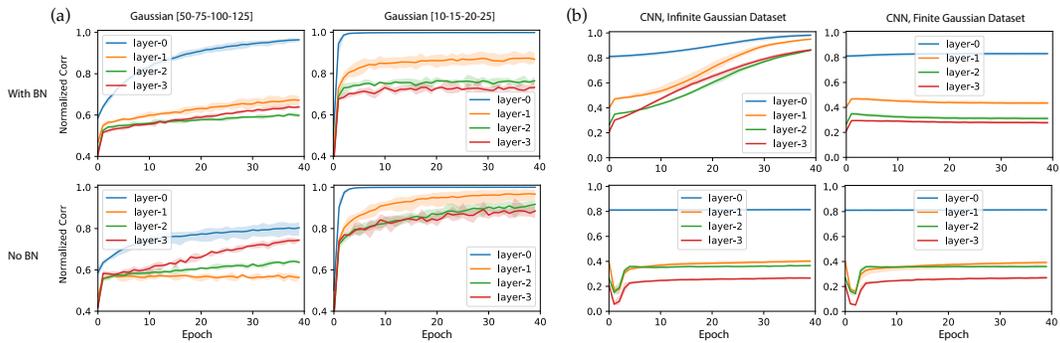}
    \vspace{-0.2in}
    \caption{Ablation studies on \syth{}. \textbf{(a)} $\bar\rho$ converges much faster in small models (10-15-20-25) than in large model (50-75-100-125). BatchNorm hurts in small models. \textbf{(b)} $\bar\rho$ stalls using finite samples.}
    \label{fig:ablation-study}
\end{figure}

\begin{figure}[ht]
    \centering
    \includegraphics[width=\textwidth]{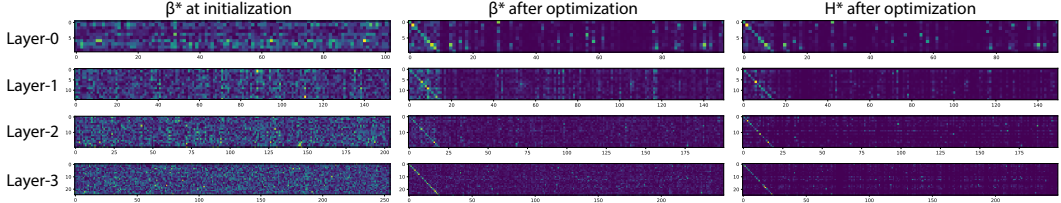}
    \vspace{-0.2in}
    \caption{\small{Visualization of (transpose of) $H^*$ and $\vbeta^*$ matrix before and after optimization (using $\syth$). Student node indices are reordered according to teacher-student node correlations. After optimization, student node who has high correlation with the teacher node also has high $\beta$ entries. Such a behavior is more prominent in $H^*$ matrix that combines $\vbeta^*$ and the activation patterns $D^*$ of student nodes (Sec.~\ref{sec:matrix-formulation}).}}
    \vspace{-0.1in}
    \label{fig:H-reorder}
\end{figure}

\vspace{-0.1in}
\subsection{Results}
\vspace{-0.1in}
Experiments are summarized in Fig.~\ref{fig:exp-gaussian} and Fig.~\ref{fig:exp-cifar10}. $\bar\rho$ indeed grows during training, in particular for low layers that are closer to the input, where $\bar\rho$ moves towards $1$. Furthermore, the final winning student nodes also have a good rank at the early stage of training, in particular after the first epoch, which is consistent with \emph{late-resetting} used in~\cite{lottery-scale}. BatchNorm helps a lot, in particular for the CNN case with {\syth} dataset. For {\cifar}, the final evaluation accuracy (see Appendix) learned by the student is often $\sim 1\%$ higher than the teacher. Using BatchNorm accelerates the growth of accuracy, improves $\bar r$, but seems not to accelerate the growth of $\bar\rho$.

The theory also predicts that the top-down modulation $\vbeta$ helps the convergence. For this, we plot $\beta^*_{jj^\circ}$ at different layers during optimization on {\syth}. For better visualization, we align each student node index $j$ with a teacher node $j^\circ$ according to highest $\rho$. Despite the fact that correlations are computed from the low-layer weights, it matches well with the top-layer modulation (identity matrix structure in Fig.~\ref{fig:H-reorder}). Besides, we also perform ablation studies on \syth{}. 

\textbf{Size of teacher network}. As shown in Fig.~\ref{fig:ablation-study}(a), for small teacher networks (FC 10-15-20-25), the convergence is much faster and training without BatchNorm is faster than training with BatchNorm. For large teacher networks, BatchNorm definitely increases convergence speed and growth of $\bar\rho$. 

\begin{figure}
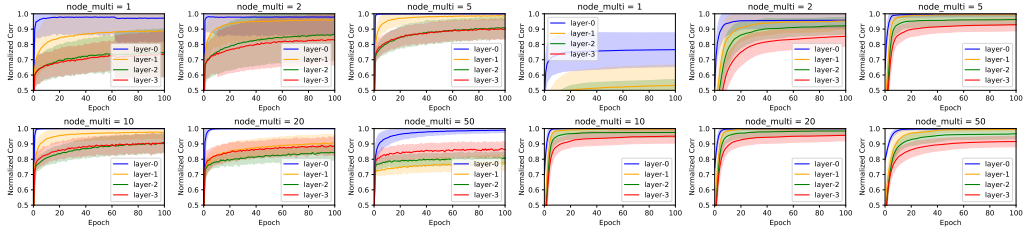

    \centering
    \includegraphics[width=0.48\textwidth]{gaussian_different_multi_no_bn-crop.pdf}
    \includegraphics[width=0.48\textwidth]{gaussian_cnn_different_multi-crop.pdf}
    \caption{Effect of different degrees of over-parameterization. Left panel: Teacher FC (10-15-20-25) without batchnorm, right panel: teacher CNN (10-15-20-25) with batchnorm. }
    \label{fig:different-over-parameterization}
\end{figure}

\textbf{Degree of over-parameterization}. Fig.~\ref{fig:different-over-parameterization} shows the effects of different degree of over-parameterization ($1\times$, $2\times$, $5\times$, $10\times$, $20\times$ and $50\times$). We initialize 32 different teacher network (10-15-20-25) with different random seed, and plot $\pm$ standard derivation with shaded area. We can clearly see that $\bar\rho$ grows more stably and converges to higher values with over-parameterization. On the other hand, $20\times$ and $50\times$ are slower in convergence due to excessive parameters.  

\textbf{Finite versus Infinite Dataset}. We also repeat the experiments with a pre-generated finite dataset of {\syth} in the CNN case (Fig.~\ref{fig:ablation-study}(b)), and find that the convergence of node similarity stalls after a few iterations. This is because some nodes receive very few data points in their activated regions, which is not a problem for infinite dataset. We suspect that this is probably the reason why {\cifar}, as a finite dataset, does not show similar behavior as {\syth}.

\ifwithappendix
\else
\vspace{-0.2in}
\fi

\section{Conclusion and Future Work}
\vspace{-0.1in}
In this paper we propose a new theoretical framework that uses teacher-student setting to understand the training dynamics of multi-layered ReLU network. With this framework, we are able to conceptually explain many puzzling phenomena in deep networks, such as why over-parameterization helps generalization, why the same network can fit to both random and structured data, why lottery tickets~\cite{lottery, lottery-scale} exist. We backup these intuitive explanations by Theorem~\ref{thm:close-convergence} and Theorem~\ref{thm:over-parameterization}, which collectively show that student nodes that are initialized to be close to the teacher nodes converge to them with a faster rate, and the fan-out weights of irrelevant nodes converge to zero. As the next steps, we aim to extend Theorem~\ref{thm:over-parameterization} to general multi-layer setting (when both $L$ and $H$ are present), relax Assumption~\ref{assumption:weak-interaction} and study more BatchNorm effects than what Theorem~\ref{thm:conserved-bn} suggests. 

\ifarxiv
\section{Acknowledgement}
The first author thanks Simon Du, Jason Lee, Chiyuan Zhang, Rong Ge, Greg Yang, Jonathan Frankle and many others for the informal discussions.
\fi

\bibliographystyle{plain}
\bibliography{main3}

\clearpage

\ifwithappendix
\section{Appendix: Proofs}
\begin{figure}[t]
    \centering
    \includegraphics[width=0.95\textwidth]{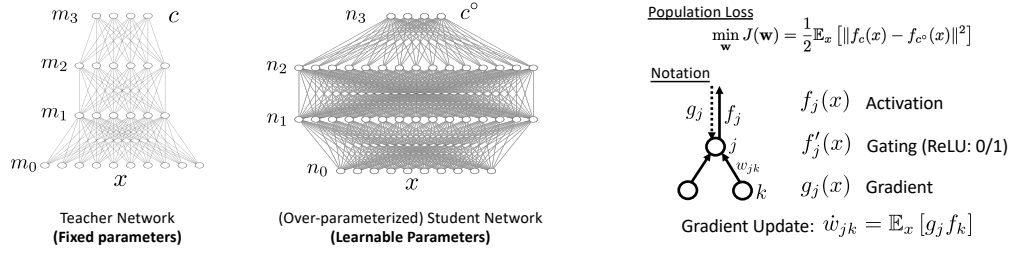}
    \caption{Teacher-Student Setting, loss function and notations.}
    \label{fig:setting}
\end{figure}

\subsection{Theorem~\ref{thm:same-layer-relationship}}
\begin{proof}
The first part of gradient backpropagated to node $j$ is:
\begin{eqnarray}
    g^1_j(x) &=& f'_j(x)\sum_{\t{j}} \beta^*_{j\t{j}}(x)f_{\t{j}}(x) \\ 
    &=& f'_j(x)\sum_{\t{j}} \beta^*_{j\t{j}}(x) f'_{\t{j}}(x)\sum_{\t{k}} w^*_{\t{j}\t{k}}f_{\t{k}}(x) \\
    &=& \sum_{\t{k}}\left[ f_j'(x)\sum_{\t{j}}\beta^*_{j\t{j}}(x) f'_{\t{j}}(x) w^*_{\t{j}\t{k}}\right] f_{\t{k}}(x)
\end{eqnarray}
Therefore, for the gradient to node $k$, we have:
\begin{eqnarray}
g^1_k(x) &=& f'_k(x) \sum_j w_{jk} g^1_j(x) \\ &=& 
f'_k(x) \sum_{\t{k}} \underbrace{\left[ \sum_{j\t{j}} w_{jk} f'_j(x)\beta^*_{j\t{j}}(x) f'_{\t{j}}(x) w^*_{\t{j}\t{k}}\right]}_{\beta^*_{k\t{k}}(x)} f_{\t{k}}(x)
\end{eqnarray}
And similar for $\beta_{kk'}(x)$. Therefore, by mathematical induction, we know that all gradient at nodes in different layer follows the same form.
\end{proof}

\begin{figure}[t]
    \centering
    \includegraphics[width=0.9\textwidth]{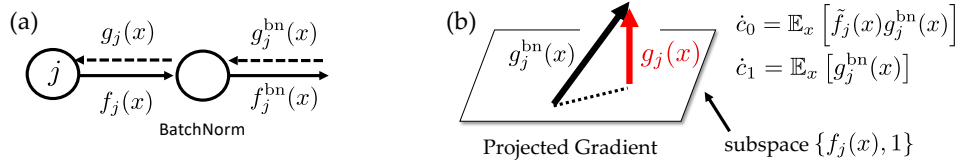}
    \caption{BatchNorm explanation}
    \label{fig:batchnorm-explanation}
\end{figure}

\begin{figure}[t]
    \centering
    \includegraphics[width=0.9\textwidth]{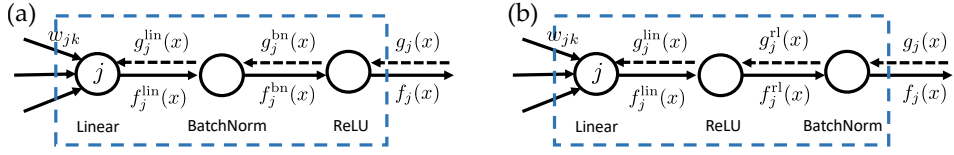}
    \caption{Different BatchNorm Configuration.}
    \label{fig:batchnorm-configuration}
\end{figure}

\subsection{Theorem~\ref{thm:matrix-formulation}} 
\begin{proof}
Using Thm.~\ref{thm:same-layer-relationship}, we can write down weight update for weight $w_{jk}$ that connects node $j$ and node $k$:
\begin{eqnarray}
    \dot w_{jk} &=& \sum_{\t{j},\t{k}} w^*_{\t{j}\t{k}}\underbrace{ \ee2{x}{f'_j(x)\beta^*_{j\t{j}}(x)f'_{\t{j}}(x)f_k(x) f_{\t{k}}(x)}}_{\beta^*_{j\t{j}k\t{k}}}\nonumber \\
    &-& 
\sum_{j',k'} w_{j'k'}\underbrace{ \ee2{x}{f'_j(x) \beta_{jj'}(x)f'_{j'}(x)f_k(x) f_{k'}(x)}}_{\beta_{jj'kk'}} \label{eqn:weight-update}
\end{eqnarray}

Note that $\t{j}$, $\t{k}$, $j'$ and $k'$ run over all parents and children nodes on the teacher side. This formulation works for over-parameterization (e.g., $j^\circ$ and $j'$ can run over different nodes). Applying Assumption~\ref{assumption:separation} and rearrange terms in matrix form yields Eqn.~\ref{eq:weight-update-matrix}.  
\end{proof}

\subsection{Theorem~\ref{thm:conserved-bn}}
\begin{proof}
Given a batch with size $N$, denote pre-batchnorm activations as $\vf = [f_j(x_1), \ldots, f_j(x_N)]^T$ and gradients as $\vg = [g_j(x_1), \ldots, g_j(x_N)]^T$ (See Fig.~\ref{fig:batchnorm-explanation}(a)). $\tilde \vf = (\vf - \mu) / \sigma$ is its whitened version, and $c_0 \tilde\vf + c_1$ is the final output of BN. Here $\mu = \frac{1}{N}\sum_i f_j(x_i)$ and $\sigma^2 = \frac{1}{N}\sum_i (f_j(x_i) - \mu)^2$ and $c_1$, $c_0$ are learnable parameters. With vector notation, the gradient update in BN has a compact form with clear geometric meaning: 

\begin{lemma}[Backpropagation of Batch Norm~\cite{yuandong-relu}]
\label{thm:bn}
For a top-down gradient $\vg$, BN layer gives the following gradient update ($P^\perp_{\bnvinput, \vone}$ is the orthogonal complementary projection of subspace $\{\bnvinput, \vone\}$):
\begin{equation}
    \vg_\vf = J^{BN}(\bnvinput)\vg = \frac{c_0}{\sigma}P^\perp_{\bnvinput, \vone}\vg, \quad \vg_\vc = S(\bnvinput{})^T \vg
    \label{eq:batch-norm-projection}
\end{equation}
\end{lemma}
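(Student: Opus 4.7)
The plan is to derive the Jacobian $J^{BN}(\vf) = \partial(c_0\tilde\vf + c_1\vone)/\partial\vf$ directly and then recognize the result as a scaled orthogonal projector. First I would write the BN output compactly as $\bnvoutput{} = c_0\tilde\vf + c_1\vone$ with $\tilde\vf = (\vf - \mu\vone)/\sigma$, $\mu = \frac{1}{N}\vone^T\vf$, and $\sigma^2 = \frac{1}{N}\|\vf - \mu\vone\|^2 = \frac{1}{N}\vf^T(I - \frac{1}{N}\vone\vone^T)\vf$. Elementary differentiation gives $\partial\mu/\partial\vf = \vone/N$ and $\partial\sigma/\partial\vf = (\vf - \mu\vone)/(N\sigma)$.

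Next I would apply the quotient rule to $\tilde\vf$, yielding
\begin{equation}
\frac{\partial\tilde\vf}{\partial\vf} = \frac{1}{\sigma}\Bigl(I - \tfrac{1}{N}\vone\vone^T\Bigr) - \frac{1}{\sigma^2}(\vf - \mu\vone)\Bigl(\frac{\partial\sigma}{\partial\vf}\Bigr)^T.
\end{equation}
Substituting $\partial\sigma/\partial\vf$ and using $(\vf-\mu\vone) = \sigma\tilde\vf$, the outer-product term collapses to $\tfrac{1}{N\sigma}\tilde\vf\tilde\vf^T$, so
\begin{equation}
J^{BN}(\vf) = c_0\,\frac{\partial\tilde\vf}{\partial\vf} = \frac{c_0}{\sigma}\Bigl(I - \tfrac{1}{N}\vone\vone^T - \tfrac{1}{N}\tilde\vf\tilde\vf^T\Bigr).
\end{equation}
Note that the $c_1\vone$ term contributes nothing to the $\vf$-Jacobian, since $c_1$ is constant in $\vf$.

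The key identification step is to check that $\{\vone/\sqrt{N},\,\tilde\vf/\sqrt{N}\}$ is an orthonormal basis of $\mathrm{span}\{\vf,\vone\}$. Orthogonality follows from $\vone^T\tilde\vf = (N\mu - N\mu)/\sigma = 0$, unit norm from $\|\tilde\vf\|^2 = N\sigma^2/\sigma^2 = N$, and spanning from $\vf = \sigma\tilde\vf + \mu\vone$. Therefore $P_{\vf,\vone} = \tfrac{1}{N}\vone\vone^T + \tfrac{1}{N}\tilde\vf\tilde\vf^T$ and $P^\perp_{\vf,\vone} = I - \tfrac{1}{N}\vone\vone^T - \tfrac{1}{N}\tilde\vf\tilde\vf^T$, matching the bracket above. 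Hence $\vg_\vf = J^{BN}(\vf)\vg = \tfrac{c_0}{\sigma}P^\perp_{\vf,\vone}\vg$.

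For the parameter gradient, since $\bnvoutput{} = c_0\tilde\vf + c_1\vone$ I would read off $\partial\bnvoutput{}/\partial c_0 = \tilde\vf$ and $\partial\bnvoutput{}/\partial c_1 = \vone$, so stacking into $S(\vf) \equiv [\tilde\vf,\,\vone]$ gives $\vg_\vc = S(\vf)^T\vg$ directly. The main obstacle is the algebraic bookkeeping in the quotient-rule step and recognizing that the resulting matrix is exactly the projector $P^\perp_{\vf,\vone}$; once the orthonormal basis of $\mathrm{span}\{\vf,\vone\}$ is identified, both equalities drop out.
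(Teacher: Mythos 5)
Your proposal is correct, and there is nothing in the paper to compare it against: the paper states this lemma without proof, importing it from \cite{yuandong-relu}, and immediately uses it inside the proof of Theorem~\ref{thm:conserved-bn}. So yours is the only self-contained argument on the table, and it is the natural one: compute the Jacobian of $\tilde\vf = (\vf-\mu\vone)/\sigma$ directly and recognize the rank-two orthogonal projector. Each step checks out. Indeed $\partial\sigma/\partial\vf = (\vf-\mu\vone)/(N\sigma)$, the quotient rule gives $\frac{\partial\tilde\vf}{\partial\vf} = \frac{1}{\sigma}\bigl(I - \frac{1}{N}\vone\vone^T\bigr) - \frac{1}{N\sigma}\tilde\vf\tilde\vf^T$ after the substitution $\vf-\mu\vone = \sigma\tilde\vf$, and the orthonormality checks $\vone^T\tilde\vf = 0$ and $\norm{\tilde\vf}^2 = N$ identify the bracket as $P^\perp_{\vf,\vone}$; the parameter gradient $\vg_\vc = S(\vf)^T\vg$ with $S(\vf) = [\tilde\vf, \vone]$ is immediate from $\bar\vf = c_0\tilde\vf + c_1\vone$. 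Two minor points would tighten the write-up: (i) backpropagation formally multiplies by the \emph{transposed} Jacobian, $\vg_\vf = \bigl(\partial\bar\vf/\partial\vf\bigr)^T\vg$, but since $\frac{c_0}{\sigma}P^\perp_{\vf,\vone}$ is symmetric the identity as stated is unaffected --- worth one sentence; (ii) your spanning claim $\mathrm{span}\{\tilde\vf,\vone\} = \mathrm{span}\{\vf,\vone\}$ requires $\sigma > 0$, i.e.\ $\vf \notin \mathrm{span}\{\vone\}$, which is implicit anyway since BN divides by $\sigma$, but should be flagged as the (only) nondegeneracy assumption. With those caveats, your derivation also substantiates the paper's remark that, unlike approximate analyses of BN, no further assumptions are imposed here.
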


Intuitively, the back-propagated gradient $J^{BN}(\bnvinput)\vg$ is zero-mean and perpendicular to the input activation $\bnvinput$ of BN layer, as illustrated in Fig.~\ref{fig:batchnorm-explanation}. Unlike~\cite{kohler2018towards, yang2019mean} that analyzes BN in an approximate manner, in Thm.~\ref{thm:bn} we do not impose any assumptions. 

Given Lemma~\ref{thm:bn}, we can prove Thm.~\ref{thm:conserved-bn}. For Fig.~\ref{fig:batchnorm-configuration}(a),
using the property that $\ee2{x}{g^{\mathrm{lin}}_jf^{\mathrm{lin}}_j} = 0$ (the expectation is taken over batch) and the weight update rule $\dot w_{jk} = \ee2{x}{g^{\mathrm{lin}}_jf_k}$ (over the same batch), we have: 
\begin{equation}
    \frac{1}{2}\frac{\dd \norm{\vw_j}^2}{\dd t} = \sum_{k\in\ch(j)} w_{jk}\dot w_{jk} = \ee2{x}{\sum_{k\in \ch(j)} w_{jk} f_k(x)g^{\mathrm{lin}}_j(x)} = \ee2{x}{f^{\mathrm{lin}}_j(x)g^{\mathrm{lin}}_j(x)} = 0 
\end{equation}
For Fig.~\ref{fig:batchnorm-configuration}(b), note that $\ee2{x}{g^{\mathrm{lin}}_jf^{\mathrm{lin}}_j} = \ee2{x}{g^{\mathrm{rl}}_j f^{\mathrm{rl'}}_j f^{\mathrm{lin}}_j} = \ee2{x}{g^{\mathrm{rl}}_j f^{\mathrm{rl}}_j} = 0$ and conclusion follows.
\end{proof}

\subsection{Lemmas}
For simplicity, in the following, we use $\delta\vw_j = \vw_j - \vw^*_j$. 

\begin{lemma}[Bottom Bounds]
\label{lemma:bottom-bounds}
Assume all $\norm{\vw_j} = \norm{\vw_{j'}} = 1$. Denote 
\begin{equation}
    \vp^*_{jj'} \equiv \vw^*_{j'} d^*_{jj'}, \quad \vp_{jj'} \equiv \vw_{j'}d_{jj'}, \quad \Delta \vp_{jj'} \equiv \vp^*_{jj'} - \vp_{jj'}
\end{equation}
If Assumption~\ref{assumption:L-condition} holds, we have: 
\begin{equation}
    \norm{\Delta\vp_{jj'}} \le (1 + K_d)d^*_{jj'}\norm{\delta\vw_{j'}}
\end{equation}
If Assumption~\ref{assumption:weak-interaction} also holds, then:
\begin{equation}
    d^*_{jj'} \le \epsilon_d(1 +K_d\norm{\delta\vw_{j'}})(1 +K_d\norm{\delta\vw_j})d^*_{jj}
\end{equation}
\end{lemma}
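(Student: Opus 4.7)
The plan is to handle the two inequalities separately, using a product-rule telescoping for the first and a chain through the purely teacher-side quantity $d^{**}_{jj'}$ for the second.

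For the first bound, I would write
\begin{equation}
\Delta\vp_{jj'} \;=\; \vw^*_{j'} d^*_{jj'} - \vw_{j'} d_{jj'} \;=\; (\vw^*_{j'} - \vw_{j'})\, d^*_{jj'} + \vw_{j'}\,(d^*_{jj'} - d_{jj'}),
\end{equation}
then take norms and use $\|\vw_{j'}\| = \|\vw^*_{j'}\| = 1$ so that the first piece contributes $d^*_{jj'}\|\delta\vw_{j'}\|$. For the second piece, I invoke Assumption~\ref{assumption:L-condition} with $\vw = \vw_j$ fixed in the first slot and $\vw_1 = \vw^*_{j'}$, $\vw_2 = \vw_{j'}$, so that the Lipschitz reference is $\psi_d(\vw_j,\vw^*_{j'}) = d^*_{jj'}$, giving $|d^*_{jj'} - d_{jj'}| \le K_d\, d^*_{jj'}\|\delta\vw_{j'}\|$. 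Summing yields $\|\Delta\vp_{jj'}\| \le (1+K_d)\, d^*_{jj'}\|\delta\vw_{j'}\|$.

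For the second bound, I would eliminate the student weight $\vw_j$ in favor of its teacher counterpart $\vw^*_j$, apply Assumption~\ref{assumption:weak-interaction} on the purely-teacher object, and then restore $\vw_j$. Concretely, the chain is
\begin{equation}
d^*_{jj'} \;\le\; (1+K_d\|\delta\vw_j\|)\, d^{**}_{jj'} \;\le\; \epsilon_d(1+K_d\|\delta\vw_j\|)\, d^{**}_{jj} \;\le\; \epsilon_d(1+K_d\|\delta\vw_{j'}\|)(1+K_d\|\delta\vw_j\|)\, d^*_{jj}.
\end{equation}
The first step changes $\vw_j \mapsto \vw^*_j$ in the first argument of $\psi_d$; since $\psi_d$ is symmetric in its two arguments (the indicator product is symmetric), Assumption~\ref{assumption:L-condition} transfers to the first slot, and picking the teacher vector $\vw^*_j$ as the Lipschitz reference $\vw_1$ gives the multiplicative form $(1+K_d\|\delta\vw_j\|)$ rather than a $(1-K_d\|\delta\vw_j\|)$ in the denominator. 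The middle inequality is a direct invocation of Assumption~\ref{assumption:weak-interaction}. The last step likewise uses symmetric Lipschitz to pass from $d^{**}_{jj}$ back to $d^*_{jj}$.

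The main care-points are (i) noticing that $\psi_d$ is symmetric so that Assumption~\ref{assumption:L-condition}, though written for second-argument perturbations, can be applied to first-argument perturbations as well, and (ii) choosing the reference endpoint in each Lipschitz invocation so that the factor appears as $(1+K_d\|\cdot\|)$ and not $1/(1-K_d\|\cdot\|)$, thereby avoiding any extra smallness assumption on $\|\delta\vw\|$. No ingredient beyond Assumptions~\ref{assumption:L-condition} and~\ref{assumption:weak-interaction} together with the unit-norm normalization $\|\vw_{j'}\| = \|\vw^*_{j'}\| = 1$ guaranteed by Theorem~\ref{thm:conserved-bn} is required; both bounds are algebraic consequences of telescoping plus a one-line Lipschitz estimate.
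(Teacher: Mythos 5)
Your proof is correct and follows essentially the same route as the paper's: the identical two-term decomposition of $\Delta\vp_{jj'}$ with the Lipschitz reference taken at $\psi_d(\vw_j,\vw^*_{j'})$, and the same three-step chain $d^*_{jj'}\le(1+K_d\norm{\delta\vw_j})d^{**}_{jj'}\le\epsilon_d(\cdots)d^{**}_{jj}\le\epsilon_d(\cdots)d^*_{jj}$ for the second bound. The only substantive difference is that you make explicit the symmetry of $\psi_d$ in its two arguments, which is indeed needed to apply Assumption~\ref{assumption:L-condition} to first-slot perturbations and which the paper leaves implicit.
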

\begin{proof}
We have for $j \neq j'$:
\begin{eqnarray}
    \norm{\Delta \vp_{jj'}} &=& 
    \norm{\vw^*_{j'} d^*_{jj'} - \vw_{j'}d_{jj'}} \\ 
    &=& \norm{\vw_{j'}(d^*_{jj'} - d_{jj'}) + (\vw^*_{j'} - \vw_{j'})d^*_{jj'}} \\
    &\le& \norm{\vw_{j'}}\norm{d^*_{jj'} - d_{jj'}} + \norm{\vw^*_{j'} - \vw_{j'}}d^*_{jj'} \\
    &\le& d^*_{jj'}K_d \norm{\delta\vw_{j'}} + d^*_{jj'} \norm{\delta\vw_{j'}} \\
    &\le& (1 + K_d)d^*_{jj'} \norm{\delta\vw_{j'}}
\end{eqnarray}
If Assumption~\ref{assumption:weak-interaction} also holds, we have:
\begin{eqnarray}
    d^*_{jj'} &\le& d^{**}_{jj'}(1 + K_d\norm{\delta\vw_{j'}}) \\
    &\le& \epsilon_d d^{**}_{jj}(1 + K_d\norm{\delta\vw_{j'}}\\
    &\le& \epsilon_d d^{*}_{jj}(1 + K_d\norm{\delta\vw_{j}})(1 + K_d\norm{\delta\vw_{j'}})
\end{eqnarray}
\end{proof}

\begin{lemma}[Top Bounds]
\label{lemma:top-bounds}
Denote 
\begin{equation}
    \vq^*_{jj'} \equiv \vv^*_{j'} l^*_{jj'}, \quad \vq_{jj'} \equiv \vv_{j'}l_{jj'}, \quad \Delta \vq_{jj'} \equiv \vq^*_{jj'} - \vq_{jj'}
\end{equation}
If Assumption~\ref{assumption:L-condition} holds, we have: 
\begin{equation}
    \norm{\Delta\vq_{jj'}} \le (1 + K_l)l^*_{jj'}\norm{\delta\vw_{j'}}
\end{equation}
If Assumption~\ref{assumption:weak-interaction} also holds, then:
\begin{equation}
    l^*_{jj'} \le \epsilon_l(1 +K_l\norm{\delta\vw_{j'}})(1 +K_l\norm{\delta\vw_j})l^*_{jj}
\end{equation}
\end{lemma}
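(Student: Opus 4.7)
The plan is to mirror the proof of Lemma 1 (Bottom Bounds) line-for-line, substituting $\psi_l, K_l, \epsilon_l$ for $\psi_d, K_d, \epsilon_d$, and the top-layer vectors $\vv_{j'}, \vv^*_{j'}$ for $\vw_{j'}, \vw^*_{j'}$ wherever the latter appear as prefactors in the definitions of $\vp_{jj'}, \vp^*_{jj'}$. Because Assumption~\ref{assumption:L-condition} and Assumption~\ref{assumption:weak-interaction} are stated symmetrically for both the $d$ and $l$ indices, no new analytic ingredient is required.

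For the first inequality, I would expand via the algebraic identity $\vq^*_{jj'} - \vq_{jj'} = \vv_{j'}(l^*_{jj'} - l_{jj'}) + (\vv^*_{j'} - \vv_{j'})\, l^*_{jj'}$, apply the triangle inequality, and then invoke Assumption~\ref{assumption:L-condition} to control the scalar piece, $|l^*_{jj'} - l_{jj'}| = |\psi_l(\vw_j, \vw^*_{j'}) - \psi_l(\vw_j, \vw_{j'})| \le K_l\, l^*_{jj'}\, \|\delta\vw_{j'}\|$. Using the unit-norm convention on the $\vv$-vectors and identifying $\|\vv^*_{j'} - \vv_{j'}\|$ with $\|\delta\vw_{j'}\|$ exactly as $\|\vw^*_{j'} - \vw_{j'}\|$ collapses to $\|\delta\vw_{j'}\|$ in Lemma~1, the two pieces add to give the stated bound $(1+K_l)\, l^*_{jj'}\, \|\delta\vw_{j'}\|$.

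For the second inequality, I would chain three applications exactly as in Lemma~1: first use Assumption~\ref{assumption:L-condition} together with the symmetry of $\psi_l$ to replace $l^*_{jj'} = \psi_l(\vw_j, \vw^*_{j'})$ by the pure-teacher quantity $l^{**}_{jj'} = \psi_l(\vw^*_j, \vw^*_{j'})$, picking up a factor of $1 + K_l(\text{deviation})$; next apply Assumption~\ref{assumption:weak-interaction} to obtain $l^{**}_{jj'} \le \epsilon_l\, l^{**}_{jj}$; finally use Assumption~\ref{assumption:L-condition} one more time to convert $l^{**}_{jj}$ back to the student-teacher quantity $l^*_{jj}$, picking up a second multiplicative factor. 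Multiplying the three factors reproduces $\epsilon_l(1 + K_l\|\delta\vw_{j'}\|)(1 + K_l\|\delta\vw_j\|)\, l^*_{jj}$.

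I do not anticipate a serious computational obstacle, since the scaffolding transfers directly from Lemma~1. The one point of care is notational: the vector deviation $\|\vv^*_{j'} - \vv_{j'}\|$ appearing naturally in $\Delta\vq$ must be identified with the scalar $\|\delta\vw_{j'}\|$ used throughout the analysis, which mirrors the corresponding identification in Lemma~1 and is consistent with the joint $(W, V)$ dynamics in which the lemma will be invoked (Thm.~\ref{thm:over-parameterization}). Beyond that bookkeeping step, every line of the proof is a mechanical symbol substitution from the already-established Lemma~1.
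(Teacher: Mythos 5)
Your proposal matches the paper's proof exactly: the paper's entire argument for this lemma is the single sentence ``The proof is similar to Lemma~\ref{lemma:bottom-bounds},'' and your line-by-line substitution $\psi_d \mapsto \psi_l$, $K_d \mapsto K_l$, $\epsilon_d \mapsto \epsilon_l$, $\vw_{j'} \mapsto \vv_{j'}$ is precisely the intended expansion. The one bookkeeping point you flag is real and lies in the paper's statement rather than in your argument: the decomposition actually yields $\norm{\Delta\vq_{jj'}} \le \norm{\vv_{j'}}K_l l^*_{jj'}\norm{\delta\vw_{j'}} + \norm{\delta\vv_{j'}}\, l^*_{jj'}$ (the Lipschitz factor is in $\delta\vw_{j'}$ because $\psi_l$ depends on the lower-layer weights, while the second term is in $\delta\vv_{j'}$, and the $\vv$'s are not unit-norm), which is also the form actually invoked later in Lemma~\ref{lemma:top-layer-contraction}.
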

\begin{proof}
The proof is similar to Lemma~\ref{lemma:bottom-bounds}.
\end{proof}

\begin{figure}
    \centering
    \includegraphics{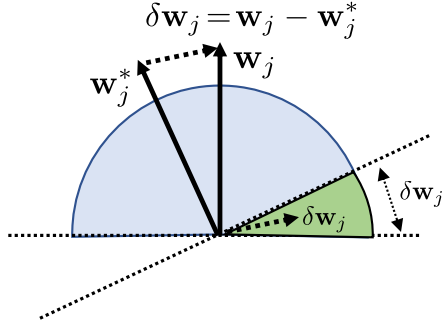}
    \caption{Explanation of Lemma.~\ref{lemma:quadratic-l}.}
    \label{fig:quadratic_l}
\end{figure}

\begin{lemma}[Quadratic fall-off for diagonal elements of $L$]
\label{lemma:quadratic-l}
For node $j$, we have:
\begin{equation}
    \norm{l^*_{jj} - l_{jj}} \le C_0 l^*_{jj}\norm{\delta\vw_j}^2
\end{equation}
\end{lemma}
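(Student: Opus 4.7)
My proof plan exploits the fact that, once $\norm{\vw_j} = \norm{\vw^*_j} = 1$ is secured by Thm.~\ref{thm:conserved-bn}, the map $\vw_j \mapsto \psi_l(\vw_j, \vw^*_j) = l^*_{jj}$ attains its maximum on the unit sphere at $\vw_j = \vw^*_j$. Because the first-order variation vanishes at a maximum, only the quadratic term survives in the Taylor expansion of $l^*_{jj}$ around $\vw^*_j$, which is exactly the fall-off the lemma asserts.

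Concretely, I would begin with the algebraic identity
\begin{equation}
l_{jj} + l^{**}_{jj} - 2 l^*_{jj} = \ee2{x}{\left(\sigma(\vw_j^Tx) - \sigma({\vw^*_j}^Tx)\right)^2} \ge 0,
\end{equation}
obtained by expanding the square. Using that ReLU is 1-Lipschitz gives
\begin{equation}
\ee2{x}{\left(\sigma(\vw_j^Tx) - \sigma({\vw^*_j}^Tx)\right)^2} \le \ee2{x}{\left((\delta\vw_j)^Tx\right)^2} \le \maxeig{\Sigma_x}\norm{\delta\vw_j}^2,
\end{equation}
where $\Sigma_x = \ee2{x}{xx^T}$. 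The next step is to collapse $l_{jj} + l^{**}_{jj} - 2l^*_{jj}$ into $2(l_{jj} - l^*_{jj})$ using the symmetry $l_{jj} = l^{**}_{jj}$. This equality follows from $\norm{\vw_j} = \norm{\vw^*_j} = 1$ together with the whitened-input regime imposed in Sec.~\ref{sec:framework} ($L_l = L_l^* = I$), under which $\psi_l(\vw,\vw) = \ee2{x}{\sigma(\vw^Tx)^2}$ depends only on $\norm{\vw}$. Substituting yields the absolute quadratic bound $\norm{l_{jj} - l^*_{jj}} \le \tfrac{1}{2}\maxeig{\Sigma_x}\norm{\delta\vw_j}^2$.

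To promote this to the stated relative bound, I would lower-bound $l^*_{jj}$. Because $l^*_{jj}$ is continuous in $\vw_j$ and equals the strictly positive constant $l^{**}_{jj}$ at $\vw_j = \vw^*_j$, in the $\theta_j \le \theta_0$ regime used throughout Sec.~\ref{sec:framework} we have $l^*_{jj} \ge c > 0$ for some $c$ depending only on $\theta_0$ and $K_l$; this is exactly the kind of estimate already produced by Assumption~\ref{assumption:L-condition}. Setting $C_0 = \maxeig{\Sigma_x}/(2c)$ then delivers $\norm{l^*_{jj} - l_{jj}} \le C_0\, l^*_{jj}\norm{\delta\vw_j}^2$.

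The main obstacle I anticipate is the implicit assumption $l_{jj} = l^{**}_{jj}$: if the input is only whitened but not rotation-invariant, this equality can fail, and Assumption~\ref{assumption:L-condition} only gives a \emph{linear} bound on the residual $\norm{l_{jj} - l^{**}_{jj}}$, which would destroy the quadratic rate. If that becomes an issue, the fallback is to argue directly from the stationarity of $\vw_j \mapsto \psi_l(\vw_j, \vw^*_j)$ at its spherical maximum $\vw_j = \vw^*_j$, extracting the quadratic remainder via a Taylor expansion constrained to the unit sphere rather than going through the symmetric identity above. This is the delicate step; the rest is essentially ReLU's 1-Lipschitz property plus a continuity lower bound on $l^*_{jj}$.
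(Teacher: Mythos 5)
Your route is genuinely different from the paper's, and in fact more substantive: the paper's ``proof'' of Lemma~\ref{lemma:quadratic-l} is only a two-sentence geometric intuition (the region where the two activation patterns disagree has volume of order $\norm{\delta\vw_j}$, and the integrand there is also of order $\norm{\delta\vw_j}$, so the product is quadratic), followed by an explicit author note that a formal proof is still needed. Your argument via the polarization identity $l_{jj} + l^{**}_{jj} - 2 l^*_{jj} = \ee2{x}{(\sigma(\vw_j^Tx) - \sigma({\vw^*_j}^Tx))^2} \ge 0$ together with the $1$-Lipschitzness of ReLU is clean, yields an explicit constant $C_0 = \maxeig{\Sigma_x}/(2c)$, and makes transparent \emph{why} the fall-off is quadratic: $\vw^*_j$ maximizes $\vw \mapsto \psi_l(\vw,\vw^*_j)$ on the unit sphere, so the first-order variation cancels. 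What the paper's heuristic buys instead is indifference to the input distribution: the ``volume $\times$ height'' picture does not care whether $x$ is symmetric, only that the disagreement region is thin.

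You have also correctly isolated the one step in your argument that does not come for free: $l_{jj} = l^{**}_{jj}$. This does not follow from whitening alone, and Assumption~\ref{assumption:L-condition} only controls $\norm{l_{jj} - l^{**}_{jj}}$ to \emph{first} order in $\norm{\delta\vw_j}$, which would degrade the bound to linear. In the two-layer setting of Theorem~\ref{thm:over-parameterization}, where Lemma~\ref{lemma:quadratic-l} is actually invoked (inside Lemma~\ref{lemma:top-layer-contraction}), the relevant $x$ is the raw network input, which is Gaussian in the paper's experiments, so the symmetry holds there and your bound matches the closed form $\psi_l \propto \sin\theta + (\pi-\theta)\cos\theta$; but in any deeper-layer extension $x$ would be a vector of ReLU outputs, hence non-negative and far from sign-symmetric, and you would need your fallback Taylor/stationarity argument, which in turn requires second-order regularity of $\psi_l$ that the paper never assumes. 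Since the paper leaves the lemma formally unproved, your proposal is not missing anything the paper supplies; it simply makes explicit the extra distributional hypothesis the lemma silently requires.
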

\begin{proof}
The intuition here is that both the volume of the affected area  and the weight difference are proportional to $\norm{\delta\vw_j}$.  $\norm{l^*_{jj} - l_{jj}}$ is their product and thus proportional to $\norm{\delta\vw_j}^2$. See Fig.~\ref{fig:quadratic_l}.
\yuandong{Need a formal proof.}
\end{proof}

\def\tvp{\tilde\vp}

\subsection{Theorem~\ref{thm:close-convergence}}
\begin{proof}
First of all, note that $\norm{\delta\vw_j} = 2\sin{\frac{\theta_j}{2}} \le 2\sin{\frac{\theta_0}{2}}$. So given $\theta_0$, we also have a bound for $\norm{\delta\vw_j}$. 

When $\boldsymbol{\beta} = \boldsymbol{\beta^*} = \vone\vone^T$, the matrix form can be written as the following: 
\begin{equation}
\dot\vw_j = \proj{j} \vw^*_j h^*_{jj} + \sum_{j'\neq j} \proj{j} \left(\vw^*_{j'} h^*_{jj'} - \vw_{j'}h_{jj'}\right) = \proj{j}\vp^*_{jj} + \sum_{j'\neq j} \proj{j} \Delta\vp_{jj'}
\end{equation}
by using $\proj{j}\vw_j \equiv 0$ (and thus $h_{jj}$ doesn't matter). Since $\norm{\vw_j}$ is conserved, it suffices to check whether the projected weight vector $\projs{j}\vw_j$ of $\vw_j$ onto the complementary space of the ground truth node $\vw^*_j$, goes to zero:

\begin{equation}
    \projs{j}\dot\vw_j = \projs{j}\proj{j} \vp^*_{jj} + \sum_{j'\neq j}\projs{j}\proj{j} \Delta\vp_{jj'}
\end{equation}
Denote $\theta_j = \angle (\vw_j, \vw^*_j)$ and a simple calculation gives that $\sin\theta_j = \norm{\projs{j}\vw_j}$. First we have:
\begin{equation}
    \projs{j}\proj{j} \vw^*_j = \projs{j}(I - \vw_j\vw_j^T)\vw^*_j =  -\projs{j}\vw_j\vw_j^T\vw^
    *_j = -\cos\theta_j \projs{j}\vw_j \label{eq:cos-convergence}
\end{equation}

From Lemma~\ref{lemma:bottom-bounds}, we knows that 
\begin{equation}
    \norm{\Delta\vp_{jj'}} \le (1+K_d)d^*_{jj'}\norm{\delta\vw_{j'}} \le \epsilon_d (1+K_d)\left[1+2K_d\sin(\theta_0/2)\right]^2 d^*_{jj}\norm{\delta\vw_{j'}} 
\end{equation}

Note that here we have $\norm{\delta\vw_{j'}} = 2\sin\frac{\theta_j}{2} = \sin\theta_j / \cos\frac{\theta_j}{2} \le \sin\theta_j / \cos\frac{\theta_0}{2}$. We discuss finite step with very small learning rate $\eta > 0$:
\begin{eqnarray}
    \sin\theta^{t+1}_j &=& \norm{\projs{j}\vw^{t+1}_j} =   \norm{\projs{j}\vw^{t}_j + \eta \projs{j}\dot\vw^t_j} \\ 
    &\le& (1 - \eta d^*_{jj} \cos \theta^t_j)\sin\theta^t_j + \eta \epsilon_dM_d\sum_{j'\neq j} d^*_{jj}\sin\theta^t_{j'} 
 \label{eq:sin-theta-convergence}
\end{eqnarray}
since $\|\projs{j}\| = \|\proj{j}\| = 1$. Here 
\begin{equation}
M_d = (1+K_d)\left[1+2K_d\sin(\theta_0/2)\right]^2 / \cos\frac{\theta_0}{2} \label{eq:md}
\end{equation}
is an iteration independent constant. 

We set $\gamma = \cos\theta_0 - (m-1)\epsilon_dM_d$. If $\gamma  > 0$, denote a constant $\bar d = \left[1 + 2K_d\sin(\theta_0/2)\right]\min_j d^{*0}_{jj}$ and from Lemma~\ref{assumption:L-condition} we know $d^*_{jj} \ge \bar d$ for all $j$. Then given the inductive hypothesis that $\sin\theta^t_j \le (1 - \eta\bar d\gamma)^{t-1} \sin\theta_0$, we have:
\begin{equation}
    \sin\theta^{t+1}_j \le (1 - \eta \bar d\gamma)^{t} \sin\theta_0
\end{equation}
Therefore, $\sin\theta^t_j \rightarrow 0$, which means that $\vw_j \rightarrow \vw^*_j$.
\end{proof}

A few remarks:

\textbf{The projection operator $\proj{j}$.} Note that $\proj{j}$ is important. Intuitively, without the projection, if the same proof logic  worked, one could have concluded that $\vw$ converges to any $\alpha \vw^*$, where $\alpha$ is a constant scaling factor, which is obviously wrong. 

Indeed, without $\proj{j}$, there would be a term $\vw_j^*h_{jj}^* - \vw_j h_{jj}$ on RHS. This term breaks into $\vw_j(h_{jj}^* - h_{jj}) + (\vw_j^* - \vw_j) h^*_{jj}$. Although there could exist $C$ so that $\norm{h_{jj}^* - h_{jj}} \le C\norm{\delta\vw_j}$, unlike Lemma~\ref{lemma:quadratic-l}, $C$ may not be small, and convergence is not guaranteed.  

\subsection{Theorem~\ref{thm:over-parameterization}}
\begin{proof}
First, only for $j\in [u]$, we have their ground truth value $\vw^*_j$. For $j\in[r]$, we assign $\vw^*_j = \vw^0_j$, i.e., their initial values. As we will see, this will make things easier.

From the assumption, we know that $\sin\theta_j \le \sin\theta_0$ for $j\in[u]$. In addition, denote that $\norm{\delta\vv^0_j} \le B_{\delta v}$ for $j\in [u]$. Denote $B_v$ as the bound for all $\norm{\vv^*_j}$. 

Now suppose we can find a $\gamma > 0$ if the following set of equations are satisfied:
\begin{eqnarray}
    \gamma &\ge& (B_v - B_{\delta v})\cos\theta_0 - \epsilon_d (B_v + B_{\delta v}) \max(B_{d,u}, B_{d,r}) > 0 \label{eq:w-cond} \\
    \gamma &\ge& 1 - \epsilon_l \max(B_{l,u}, B_{l,r}) - \kappa > 0 \label{eq:v-cond}
\end{eqnarray}
Here 
\begin{eqnarray}
    \bar d &=& (1 - K_d C_{d,j})\min_j d^{*0}_{jj} > 0 \\
    \bar l &=& (1 - K_l C_{l,j})\min_j l^{*0}_{jj} > 0 \\
    \bar\lambda &=& \min(\bar d, \bar l) \label{eq:lambda-bar} \\
    \kappa &=& 2C_0\sin(\theta_0/2)(1+B_{\delta v}) \\ 
    C_{d,u} &=& 2K_d\sin(\theta_0/2) \\
    C_{d,r} &=& \epsilon_d K_d \frac{B_{d,r}(B_v + B_{\delta v})B_v}{\bar\lambda\gamma(2 - \eta\bar\lambda\gamma)} \\
    M_d^{uu} &=& (1+K_d)(1 + C_{d,u})^2 / \cos\frac{\theta_0}{2}  \\
    M_d^{ur} &=& (1+K_d)(1 + C_{d,u})(1 + C_{d,r})  \\
    M_d^{ru} &=& (1+K_d)(1 + C_{d,u})(1 + C_{d,r}) / \cos\frac{\theta_0}{2} \\
    M_d^{rr} &=& (1+K_d)(1 + C_{d,r})^2 \\
    B_{d,u} &=& (m-1)M_d^{uu} + (n-m)M_d^{ur} \\
    B_{d,r} &=& (m-1)M_d^{ru} + (n-m)M_d^{rr}
\end{eqnarray}
and similarly we can define $C_l$ and $M_l$ etc. If we can find such a $\gamma > 0$ then the dynamics converges. Here all $C$ are close to 0 and $M$ are close to 1. 

Note that if $\epsilon_d$ and $\epsilon_l$ are small, it is obvious to see there exists a feasible $\gamma > 0$ (e.g., $\gamma = 1$). 

To prove it, we maintain the following induction hypothesis for iteration $t$ \yuandong{Explain what is $M_{d,jj}$}:
\begin{equation}    
    d^{*t}_{jj'} \le \epsilon_d M_{d,jj'} d^{*t}_{jj}, \ \ 
    l^{*t}_{jj'} \le \epsilon_l M_{l,jj'} l^{*t}_{jj},
    \quad j'\neq j \label{eq:weight-separation} \tag{W-Separation}
\end{equation}
\begin{equation}
    \sin\theta^t_j \le (1 - \eta \bar d\gamma)^{t-1} \sin\theta_0,\quad j\in[u] \label{eq:wu-contract} \tag{$W_u$-Contraction}
\end{equation}
\begin{equation}
    \norm{\delta\vv^t_j} \le (1 - \eta\bar l\gamma)^{t-1}B_{\delta v},\ \ j\in[u], \quad\quad
    \norm{\vv^t_j} \le (1 - \eta\bar l \gamma)^{t-1}B_v,\ \  j\in[r] \label{eq:v-contract} \tag{$V$-Contraction}
\end{equation}

Besides, the following condition is involved (but it is not part of induction hypothesis):
\begin{equation}
    \norm{\vw^t_j - \vw^0_j} \le C_{d,r}, \quad j\in[r]\tag{$W_r$-Bound}\label{eq:wr-bound}
\end{equation}
\begin{equation}
    d^{*t}_{jj} \ge d^{*0}_{jj}(1 - K_d C_{d,j}) \ge \bar d > 0, \quad l^{*t}_{jj} \ge l^{*0}_{jj}(1 - K_l C_{l,j}) \ge \bar l > 0
\end{equation}

The proof can be decomposed in the following three lemma. 
\begin{lemma}[Top-layer contraction]
\label{lemma:top-layer-contraction}
If ~\eqref{eq:weight-separation} holds for $t$, then ~\eqref{eq:v-contract}) holds for iteration $t + 1$.
\end{lemma}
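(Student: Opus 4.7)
The plan is to propagate the (W-Separation) hypothesis at time $t$ into a per-row contraction estimate on $V$ by isolating the "self-coupling" in each row and bounding everything else as a small leak. Starting from the row form of the update,
\[
\vv_j^{t+1}=\vv_j^t+\eta\Bigl[l^*_{jj}\vv^*_j-l_{jj}\vv_j+\sum_{j^\circ\neq j}l^*_{jj^\circ}\vv^*_{j^\circ}-\sum_{j'\neq j}l_{jj'}\vv_{j'}\Bigr],
\]
the self term delivers contraction because $l_{jj}\ge\bar l$, while every off-diagonal $l^*$- or $l$-entry is controlled by (W-Separation) together with Lemma~\ref{lemma:top-bounds}. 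The definition $\gamma\le 1-\epsilon_l\max(B_{l,u},B_{l,r})-\kappa$ from Eqn.~\ref{eq:v-cond} is tailored exactly so that the total leak fits inside the contraction margin.

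For $j\in[u]$ I would substitute $\vv_j=\vv^*_j+\delta\vv_j$ and rewrite the self-coupling as $(l^*_{jj}-l_{jj})\vv^*_j-l_{jj}\delta\vv_j$, then bound $\|l^*_{jj}-l_{jj}\|\le C_0 l^*_{jj}\|\delta\vw_j\|^2$ by Lemma~\ref{lemma:quadratic-l} combined with $\|\delta\vw_j\|\le 2\sin(\theta_0/2)$; this is precisely where the $\kappa$ factor in $\gamma$ originates. Each off-diagonal $l^*_{jj^\circ}$ ($j^\circ\neq j$) is then bounded by $\epsilon_l M_{l,jj^\circ}l^*_{jj}$ directly, and each $l_{jj'}$ ($j'\neq j$) is first converted to an $l^*$-quantity via Lemma~\ref{lemma:top-bounds} and then bounded by $\epsilon_l M_{l,jj'}l^*_{jj}$, the conversion $(1+K_l\|\delta\vw\|)$ factors being absorbed into $M_l^{uu}$ and $M_l^{ur}$. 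Applying the inductive norm bounds $\|\vv^*_{j^\circ}\|\le B_v$, $\|\vv_{j'}\|\le B_v+B_{\delta v}$ for $j'\in[u]$ and $\|\vv_{j'}\|\le B_v$ for $j'\in[r]$, the leak sums to at most $\eta l^*_{jj}\epsilon_l B_{l,u}(B_v+B_{\delta v})$. Adding this to the $\kappa$-term from Lemma~\ref{lemma:quadratic-l}, and using $l_{jj},l^*_{jj}\ge\bar l$, I expect $\|\delta\vv_j^{t+1}\|\le(1-\eta\bar l\gamma)\|\delta\vv_j^t\|$, after which the induction hypothesis propagates the bound one step.

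For $j\in[r]$ the argument is cleaner because every $l^*_{jj^\circ}$ with $j^\circ\in[m]$ is already a cross-correlation in the sense of (W-Separation): the "ghost target" for an $r$-node is its initialization on the bottom layer and $\vv^*_j=0$ on the top layer. The update then reads $\vv_j^{t+1}=(1-\eta l_{jj})\vv_j^t+\eta\sum_{j^\circ}l^*_{jj^\circ}\vv^*_{j^\circ}-\eta\sum_{j'\neq j}l_{jj'}\vv_{j'}$, and the same leak argument with the row-sum constant $B_{l,r}=(m-1)M_l^{ru}+(n-m)M_l^{rr}$ gives $\|\vv_j^{t+1}\|\le(1-\eta\bar l\gamma)\|\vv_j^t\|$, from which the desired exponential bound follows inductively.

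The hard part will not be any individual inequality but the combinatorial bookkeeping: I must verify that the leak row-sums assembled in the two cases populate exactly the quantities $B_{l,u},B_{l,r}$ defined in Thm.~\ref{thm:over-parameterization} (mixing $m-1$ or $m$ teacher off-diagonals with $n-m$ or $n-1$ student off-diagonals, depending on case), and that the $(1+K_l\|\delta\vw\|)$ Lipschitz factors collapse precisely into the corresponding $M_l^{uu},M_l^{ur},M_l^{ru},M_l^{rr}$. A secondary but essential subtlety is that the $r$-to-$r$ leakage $\sum_{j'\in[r],j'\neq j}l_{jj'}\vv_{j'}$ must be bounded by $\epsilon_l B_v$ rather than the naive $B_v$; this is exactly why the theorem requires (W-Separation) pairwise over all columns of $W^*\cup W^0_r$ and why the $\gamma$ in Eqn.~\ref{eq:v-cond} takes the max over both row sums.
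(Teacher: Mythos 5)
Your proposal is correct and follows essentially the same route as the paper's proof: decompose each row update of $\dot V = L^*V^* - LV$ into a diagonal self-coupling term plus off-diagonal leaks, bound the leaks via Lemma~\ref{lemma:top-bounds} together with \eqref{eq:weight-separation}, extract the $\kappa$ term from Lemma~\ref{lemma:quadratic-l}, and absorb everything into the margin guaranteed by Eqn.~\ref{eq:v-cond}. The only (immaterial) difference is that you split the self-term as $(l^*_{jj}-l_{jj})\vv^*_j - l_{jj}\delta\vv_j$ while the paper uses $-l^*_{jj}\delta\vv_j + (l^*_{jj}-l_{jj})\vv_j$; the resulting bookkeeping with $B_{l,u}$, $B_{l,r}$ and the $M_l$ constants is exactly what the paper does.
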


\begin{lemma}[Bottom-layer contraction]
\label{lemma:bottom-layer-contraction}
If ~\eqref{eq:v-contract} holds for $t$, then ~\eqref{eq:wu-contract} holds for $t+1$ and ~\eqref{eq:wr-bound} holds for $t + 1$.
\end{lemma}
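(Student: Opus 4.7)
My plan is to analyze the bottom-layer dynamics $\dot W = W^{*}H^{*}-WH$ row by row, handling $j\in[u]$ and $j\in[r]$ separately. The key structural observation is that $H=\vbeta\circ D$ and $H^{*}=\vbeta^{*}\circ D^{*}$ (as in Thm.~\ref{thm:matrix-formulation}), and the entries of $\vbeta,\vbeta^{*}$ that involve an $r$-set index carry an explicit factor of $\vv_j$ (for $j\in[r]$) coming from the top-down recursion in Thm.~\ref{thm:same-layer-relationship}. The V-contraction hypothesis~\eqref{eq:v-contract} therefore translates directly into smallness of those $\vbeta$-entries, and this is the mechanism that lets us tame the over-parameterized tail without placing a hypothesis on $\norm{W_r-W^{0}_r}$ at time $t$.

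\textbf{Argument for $j\in[u]$.} Here I would mimic the projection template of Thm.~\ref{thm:close-convergence}. Since $\norm{\vw_j}$ is conserved by Thm.~\ref{thm:conserved-bn}, it suffices to control $\sin\theta^{t+1}_j = \norm{P^{\perp}_{\vw^{*}_j}\vw^{t+1}_j}$. The diagonal pair $\vw^{*}_j h^{*}_{jj}-\vw_j h_{jj}$, projected onto the complement of $\vw^{*}_j$, yields the $-\cos\theta_j$ contraction of Eqn.~\ref{eq:cos-convergence}, with $d^{*t}_{jj}\ge \bar d$ by the auxiliary lower bounds in the appendix. The off-diagonal $[u]$--$[u]$ terms are handled via Lemma~\ref{lemma:bottom-bounds} combined with Assumption~\ref{assumption:weak-interaction}, contributing at most $(m-1)\epsilon_d M_d^{uu}\bar d$ against the contraction. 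The genuinely new $[u]$--$[r]$ terms each contain a factor $\vbeta_{jj'}$ with $j'\in[r]$, which by the recursion of Thm.~\ref{thm:same-layer-relationship} is bounded by $\norm{\vv^{t}_{j'}}\le (1-\eta\bar l\gamma)^{t-1}B_v$ times a bounded upper-layer constant. Collecting these into a one-step recurrence and using the choice of $\gamma$ from Eqn.~\ref{eq:w-cond} then gives $\sin\theta^{t+1}_j\le (1-\eta\bar d\gamma)^{t}\sin\theta_0$.

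\textbf{Argument for $j\in[r]$ and main obstacle.} For a residual node the teacher has no matching column, so every summand of $\dot\vw^{t}_j$ is either an off-diagonal entry of $D,D^{*}$ (small by Assumption~\ref{assumption:weak-interaction} together with Lemma~\ref{lemma:bottom-bounds}) or carries a factor of $\norm{\vv^{t}_j}$ through some $\vbeta_{jj'}$ with $j\in[r]$. Both are controlled by~\eqref{eq:v-contract}, so $\norm{\dot\vw^{t}_j}\le A\cdot(1-\eta\bar l\gamma)^{t-1}$ for some iteration-independent $A$. Summing the geometric series in $t$ yields $\norm{\vw^{t+1}_j-\vw^{0}_j}\le \eta A\sum_{s=0}^{t}(1-\eta\bar l\gamma)^{s-1}\le A/(\bar l\gamma)$, and one takes $C_{d,r}$ to be exactly this bound. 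The main obstacle is the coupled bookkeeping: Assumption~\ref{assumption:weak-interaction} is only assumed at $t=0$ for $W^{0}_r$, and to keep it alive during the $j\in[u]$ estimate one must convert the just-established bound $\norm{\vw^{t+1}_j-\vw^{0}_j}\le C_{d,r}$ back into control of $d^{*t}_{jj'}$ and $l^{*t}_{jj'}$ via the Lipschitz constants $K_d,K_l$ from Assumption~\ref{assumption:L-condition} — this is exactly what the constants $C_{d,r}, M_d^{ur}, M_d^{rr}$ in the appendix are designed to absorb. Ensuring that the aggregated errors do not swamp the diagonal contraction is precisely the content of Eqn.~\ref{eq:w-cond}, so the only delicate verification is that $\gamma$ remains strictly positive under small $\epsilon_d,\epsilon_l,\theta_0$, which closes the induction.
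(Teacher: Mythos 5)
Your proposal is correct and follows essentially the same route as the paper: the same projection-based contraction recurrence for $j\in[u]$ (diagonal term giving $-\cos\theta_j$, off-diagonal $[u]$--$[u]$ terms controlled by Lemma~\ref{lemma:bottom-bounds} plus weight separation, $[u]$--$[r]$ terms damped by the decaying $\norm{\vv^t_{j'}}$), and the same geometric-series argument showing $\norm{\vw^{t+1}_j-\vw^0_j}\le C_{d,r}$ for $j\in[r]$. The only cosmetic differences are that the paper's series decays at the squared rate $(1-\eta\bar\lambda\gamma)^{2t'}$ (each summand is a product of two contracting factors), yielding a slightly different constant, and that the conversion of the $W_r$-bound back into weight separation is deferred to Lemma~\ref{lemma:weight-separation} rather than handled inside this lemma.
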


\begin{lemma}[Weight separation]
\label{lemma:weight-separation}
If ~\eqref{eq:weight-separation} holds for $t$, ~\eqref{eq:wr-bound} holds for $t+1$ and ~\eqref{eq:wu-contract} holds for $t+1$, 
then ~\eqref{eq:weight-separation} holds for $t + 1$. 
\end{lemma}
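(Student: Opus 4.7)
The goal is to propagate the weight–separation bounds $d^{*t+1}_{jj'} \le \epsilon_d M_{d,jj'} d^{*t+1}_{jj}$ and $l^{*t+1}_{jj'} \le \epsilon_l M_{l,jj'} l^{*t+1}_{jj}$ from iteration $t$ to iteration $t+1$. The natural strategy is to pass through the purely‐teacher quantity $d^{**}_{jj'}$, which by hypothesis satisfies Assumption~\ref{assumption:weak-interaction} for every pair drawn from the columns of $W^*$ and $W^0_r$ (recall the convention $\vw^*_j \equiv \vw^0_j$ for $j \in [r]$). The two Lipschitz inequalities of Assumption~\ref{assumption:L-condition} then let me compare $d^{*t+1}_{jj'}$ to $d^{**}_{jj'}$ and $d^{*t+1}_{jj}$ to $d^{**}_{jj}$, after which $\epsilon_d$ is inserted by Assumption~\ref{assumption:weak-interaction} and the Lipschitz slack is absorbed into $M_{d,jj'}$.

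The first step is to control $\norm{\vw^{t+1}_j - \vw^*_j}$ by a case split on whether $j \in [u]$ or $j \in [r]$. For $j \in [u]$, the assumed ($W_u$-Contraction) at $t+1$ gives $\norm{\vw^{t+1}_j - \vw^*_j} = 2\sin(\theta^{t+1}_j/2) \le 2\sin(\theta_0/2)$, producing the Lipschitz factor $C_{d,u} = 2K_d\sin(\theta_0/2)$. For $j \in [r]$, the convention $\vw^*_j = \vw^0_j$ together with ($W_r$-Bound) yields $\norm{\vw^{t+1}_j - \vw^0_j} \le C_{d,r}$. The four combinations of $j, j' \in \{[u],[r]\}$ produce the four composite constants $M_d^{uu}, M_d^{ur}, M_d^{ru}, M_d^{rr}$ already introduced in the proof, which I will unify under the symbol $M_{d,jj'}$.

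With the weight deviations in hand, the core chain of inequalities is
\[
d^{*t+1}_{jj'} = \psi_d(\vw^{t+1}_j, \vw^*_{j'}) \le d^{**}_{jj'}\,(1 + K_d \norm{\delta\vw^{t+1}_j})(1 + K_d \norm{\delta\vw^{t+1}_{j'}}),
\]
where I have applied Assumption~\ref{assumption:L-condition} twice (first in $\vw_j$, then in $\vw_{j'}$, using the triangle inequality and the definitions in Lemma~\ref{lemma:bottom-bounds}). Invoking Assumption~\ref{assumption:weak-interaction} gives $d^{**}_{jj'} \le \epsilon_d d^{**}_{jj}$, and a reverse Lipschitz step recovers $d^{**}_{jj} \le d^{*t+1}_{jj}/(1 - K_d\norm{\delta\vw^{t+1}_j})$ in the denominator. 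Collecting the Lipschitz prefactors into the appropriate $M_d^{**}$ yields the claimed bound; the identical argument with $\psi_l$ and $K_l$ handles the $l^*$ case, producing $M_l^{**}$ and closing the induction step.

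The main obstacle is bookkeeping: I must guarantee that the denominator factor $1 - K_d\norm{\delta\vw^{t+1}_j}$ stays strictly positive throughout all iterations, so that $d^{*t+1}_{jj} \ge \bar d > 0$ and the constants $M_{d,jj'}$ are finite. This is precisely the reason the statement builds in $\theta_0$ and the weight-perturbation threshold $C_{d,r}$ at the definition of $\bar d$, $\bar l$, and $\bar\lambda$: they must be chosen small enough that $K_d C_{d,u}, K_d C_{d,r}, K_l C_{l,u}, K_l C_{l,r} < 1$ uniformly. Once this numerical consistency check between $\gamma$, $\bar\lambda$, $\epsilon_d$, and $\epsilon_l$ is secured (it is exactly the content of Eqns.~\eqref{eq:w-cond}--\eqref{eq:v-cond}), the induction propagates cleanly and \eqref{eq:weight-separation} is preserved at step $t+1$.
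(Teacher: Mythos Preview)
Your proposal is correct and follows essentially the same route as the paper: the paper's proof is a one-line appeal to Lemmas~\ref{lemma:bottom-bounds} and~\ref{lemma:top-bounds} together with the weight bounds, and what you have written is precisely the unfolded content of that appeal --- Lipschitz to pass from $d^{*t+1}_{jj'}$ to $d^{**}_{jj'}$, invoke Assumption~\ref{assumption:weak-interaction}, and Lipschitz back to $d^{*t+1}_{jj}$, with the case split on $[u]$ versus $[r]$ supplying $C_{d,u}$ and $C_{d,r}$. One cosmetic remark: since $d^{*t+1}_{jj'} = \psi_d(\vw^{t+1}_j,\vw^*_{j'})$ already has the teacher argument in the second slot, only a single Lipschitz application (in the first argument) is needed to reach $d^{**}_{jj'}$, so the extra factor $(1+K_d\norm{\delta\vw^{t+1}_{j'}})$ in your displayed chain is harmless slack rather than a necessary step.
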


\begin{figure}
    \centering
    \includegraphics[width=\textwidth]{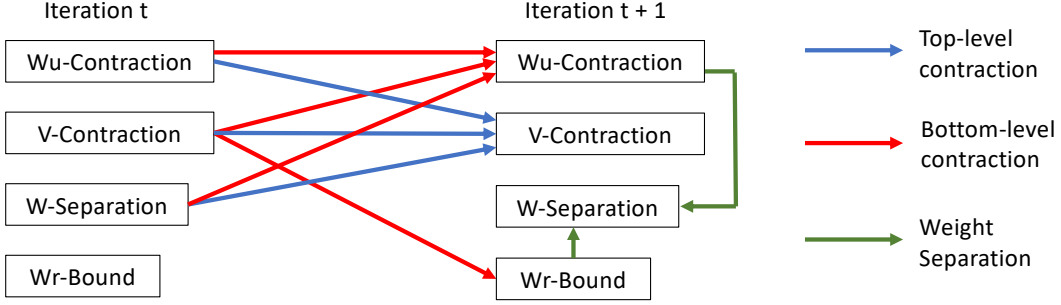}
    \caption{Proof sketch of Thm.~\ref{thm:over-parameterization}.}
    \label{fig:proof-sketch-overparameterization}
\end{figure}

As suggested by Fig.~\ref{fig:proof-sketch-overparameterization}, if all the three lemmas are true then the induction hypothesis are true.
\end{proof}

In the following, we will prove the three lemmas. 
\subsubsection{Lemma~\ref{lemma:top-layer-contraction}}
\begin{proof}
On the top-layer, we have $\dot V = L^*V^* - LV$. Denote that $V = \left[
    \begin{array}{c}
    \vv_1 \\
    \ldots \\
    \vv_n
    \end{array}
    \right]$, where $\vv_j$ is the $j$-th row of the matrix $V$. For each component, we can write:
\begin{equation}
    \dot\vv_j = \mathbb{I}(j \in [u])\vq^*_{jj} - \vq_{jj}
    + \sum_{j'\neq j, j'\in [u]} \Delta\vq_{jj'} 
    + \sum_{j'\neq j, j'\in [r]} \vq_{jj'}
\end{equation}
Note that there is no projection (if there is any, the projection should be in the columns rather than the rows). 

If~\eqref{eq:weight-separation} is true, we know that for $j\neq j'$,
\begin{eqnarray}
    \norm{\Delta\vq_{jj'}} \le \epsilon_l M_{l,uu}l^*_{jj}\norm{\delta\vv_{j'}}, \quad \norm{\vq_{jj'}} \le \epsilon_l M_{l,ur}l^*_{jj}\norm{\vv_{j'}}, \quad j \in [u] \\
    \norm{\Delta\vq_{jj'}} \le \epsilon_l M_{l,ru}l^*_{jj}\norm{\delta\vv_{j'}}, \quad \norm{\vq_{jj'}} \le \epsilon_l M_{l,rr}l^*_{jj}\norm{\vv_{j'}}, \quad j \in [r]
\end{eqnarray}
Now we discuss $j\in[u]$ and $j\in[r]$: 

\textbf{Relevant nodes}. For $j\in [u]$, the first two terms are: 
\begin{equation}
    \Delta\vq_{jj} = -l^*_{jj}\delta\vv_j + (l^*_{jj} - l_{jj})\vv_j
\end{equation}
From Lemma~\ref{lemma:quadratic-l} we know that:
\begin{equation}
    \norm{(l^*_{jj} - l_{jj})\vv_j} \le Cl^*_{jj}\norm{\delta\vw_j}^2\norm{\vv_j} \le 2C\sin(\theta_0/2)(1+B_{\delta v})l^*_{jj}\norm{\delta\vw_j} = \kappa l^*_{jj}\norm{\delta\vw_j} 
\end{equation}
Therefore using~\eqref{eq:v-contract} and ~\eqref{eq:wu-contract} at iteration $t$, we have: 
\begin{eqnarray}
    \norm{\delta\vv^{t+1}_j} &\le& (1 - \eta l^*_{jj})\norm{\delta\vv^{t}_j} + \eta \kappa l^*_{jj}\norm{\delta\vw^t_j} + \eta \epsilon_l M_{l,uu} l^*_{jj} \sum_{j'\neq j, j'\in [u]} \norm{\delta\vv^{t}_{j'}} + \eta \epsilon_l M_{l,ur}l^*_{jj}\sum_{j'\neq j, j'\in [r]}\norm{\vv^{t}_{j'}} \nonumber \\
    &\le& (1 - \eta \bar l \gamma)^{t+1} B_{\delta v}
\end{eqnarray}
Since $\gamma$ satisfies Eqn.~\ref{eq:v-cond}.

\textbf{Irrelevant nodes}. Note that for $j\in [r]$, we don't have the term $\vq^*_{jj}$. Therefore, we have:
\begin{eqnarray}
    \norm{\vv^{t+1}_j} &\le& (1 - \eta l_{jj})\norm{\vv^{t}_j} + \eta \epsilon_l M_{l,ru} l^*_{jj} \sum_{j'\neq j, j'\in [u]} \norm{\delta\vv^{t}_{j'}} + \eta \epsilon_l M_{l,rr} l^*_{jj} \sum_{j'\neq j, j'\in [r]}\norm{\vv^{t}_{j'}} \nonumber \\
    &\le& (1 - \eta l^*_{jj})\norm{\vv^{t}_j} + \eta\kappa l^*_{jj}\norm{\vv^{t}_j} + \eta \epsilon_l M_{l,ru} l^*_{jj} \sum_{j'\neq j, j'\in [u]} \norm{\delta\vv^{t}_{j'}} + \eta \epsilon_l M_{l,rr} l^*_{jj} \sum_{j'\neq j, j'\in [r]}\norm{\vv^{t}_{j'}} \nonumber \\
    &\le& (1 - \eta\bar l \gamma)^{t+1} B_v
\end{eqnarray}
\end{proof}

\subsubsection{Lemma~\ref{lemma:bottom-layer-contraction}}
\begin{proof}

Similar to the proof of Thm.~\ref{thm:close-convergence}, for node $j$, in the lower-layer, we have: 
\begin{equation}
    \dot\vw_j = \mathbb{I}(j \in [u])\proj{j} \tvp^*_{jj} + \proj{j}\sum_{j'\neq j, j'\in [u]}\Delta\tvp_{jj'} + \proj{j} \sum_{j'\in [r], j'\neq j} \tvp_{jj'}
\end{equation}
where $h_{jj'} = d_{jj'}\vv_j^T\vv_{j'}$ and $\tvp_{jj'} = \vp_{jj'} \vv_j^T\vv_{j'} = \vw_{j'}h_{jj'}$.

Due to~\eqref{eq:weight-separation} and $\norm{\vw_{j'}} = 1$, we know that for $j\neq j'$:
\begin{eqnarray}
    \norm{\Delta\tvp_{jj'}} \le \epsilon_d M_{d,uu} d^*_{jj}\norm{\delta\vw_{j'}}\norm{\vv_j}\norm{\vv_{j'}}, \ \ \norm{\tvp_{jj'}} \le \epsilon_d M_{d,ur} d^*_{jj}\norm{\delta\vw_{j'}}\norm{\vv_j}\norm{\vv_{j'}}, \quad j\in [u] \\
    \norm{\Delta\tvp_{jj'}} \le \epsilon_d M_{d,ru} d^*_{jj}\norm{\delta\vw_{j'}}\norm{\vv_j}\norm{\vv_{j'}}, \ \ \norm{\tvp_{jj'}} \le \epsilon_d M_{d,rr} d^*_{jj}\norm{\delta\vw_{j'}}\norm{\vv_j}\norm{\vv_{j'}}, \quad j\in [r]
    \label{eq:additional-term-w}
\end{eqnarray}
\yuandong{Some sin issues here} Note that if $\norm{\vv_{j'}}$(for $j\in [r]$) doesn't converge to zero, then due to Eqn.~\ref{eq:additional-term-w}, there is always residue and $\vw_j$ won't converge to $\vw^*_j$. 

Now we discuss two cases: 

\textbf{Relevant nodes}. For $j\in [u]$, similar to Eqn.~\ref{eq:cos-convergence} we have:
\begin{eqnarray}
    \sin\theta^{t+1}_j &\le& (1 - \eta d^*_{jj}\norm{\vv^t_j}^2 \cos \theta^t_j)\sin\theta^t_j + \eta \norm{\vv^t_j} \epsilon_d M_{d, uu} d^*_{jj}\sum_{j'\neq j, j\in[u]}\norm{\vv^t_{j'}} \sin\theta^t_{j'} \nonumber \\
    &+& \eta \norm{\vv^t_j} \epsilon_d M_{d, ur} d^*_{jj}\sum_{j'\neq j, j\in[r]} \norm{\vv^t_{j'}} 
\end{eqnarray}

Since~\eqref{eq:wu-contract} and ~\eqref{eq:v-contract} holds for time $t$, we know that:
\begin{equation}
    \sin\theta^{t+1}_j \le (1 - \eta\bar d \gamma)^{t+1}\sin\theta_0 
\end{equation}
since Eqn.~\ref{eq:w-cond} holds. 

\textbf{Irrelevant nodes}. In this case, we cannot prove for $j\in [r]$, $\vw_j$ converges to any determined target. Instead, we show that $\vw_j$ won't move too much from its initial location $\vw^0_j$, which is also set to be $\vw^*_j$, before its corresponding $\vv_j$ converges to zero. This is important to ensure that~\eqref{eq:weight-separation} remains correct thorough-out the iterations. 

For any $j\in [u]$, using~\eqref{eq:wu-contract} and~\eqref{eq:v-contract}, we know that the distance between the current $\vw_j$ and its initial value is 
\begin{eqnarray}
    \norm{\vw^{t+1}_j - \vw^0_j} &\le& \eta \sum_{t'=0}^t \norm{\dot \vw^{t'}_j} \le \eta \sum_{t'=0}^t\left\| \sum_{j'\neq j, j'\in [u]}\Delta\tvp^{t'}_{jj'} + \sum_{j'\in [r], j'\neq j} \tvp^{t'}_{jj'}\right\| \\
    &\le& \eta \epsilon_d B_{d,u} (B_v + B_{\delta v})B_v \sum_{t'=0}^t (1 - \eta\bar\lambda\gamma)^{2t'} \\
    &=& \frac{\epsilon_d B_{d,r} (B_v + B_{\delta v})B_v}{\bar\lambda\gamma(2 - \eta\bar\lambda\gamma)} = C_{d,r}
\end{eqnarray}
Therefore, we prove that~\eqref{eq:wr-bound} holds for iteration $t+1$. 
\end{proof}

\subsection{Lemma~\ref{lemma:weight-separation}}
\begin{proof}
Simply followed from combining Lemma~\ref{lemma:top-bounds}, Lemma~\ref{lemma:bottom-bounds} and weight bounds ~\eqref{eq:wu-contract} and~\eqref{eq:v-contract}.
\end{proof}

\begin{figure}[t]
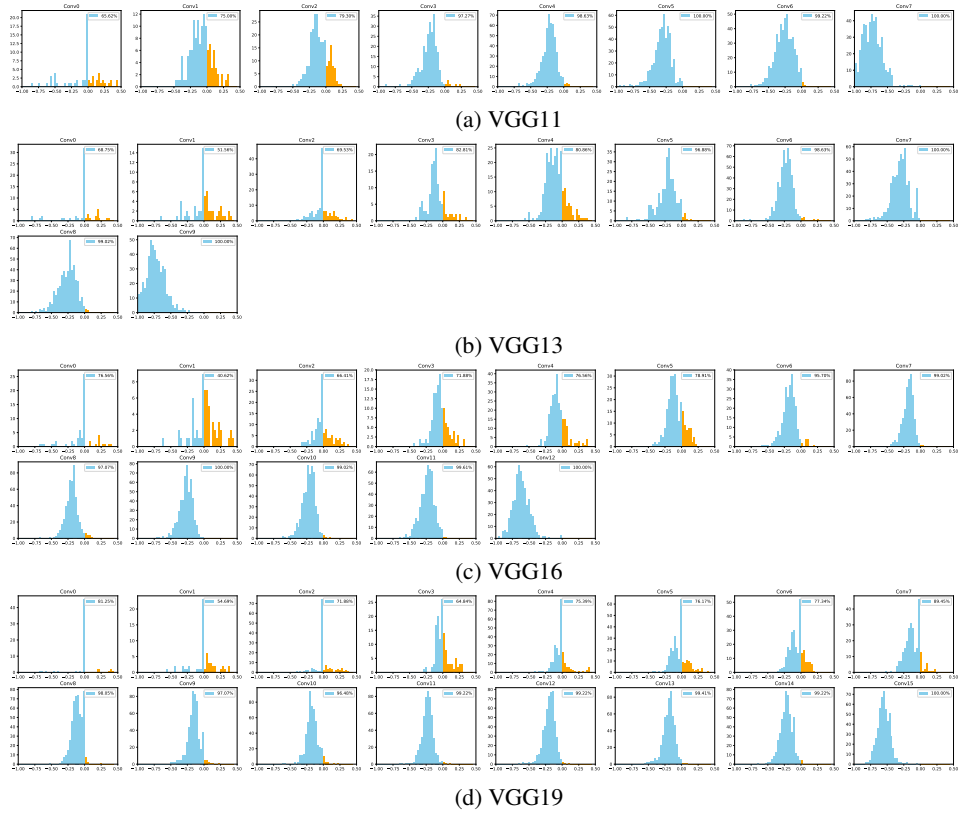

    \centering
    \begin{subfigure}[t]{0.95\textwidth}
    \includegraphics[width=0.95\textwidth]{vgg11_bn-crop.pdf}
    \caption{VGG11}
    \end{subfigure}
    
    \begin{subfigure}[t]{0.95\textwidth}
    \includegraphics[width=0.95\textwidth]{vgg13_bn-crop.pdf}
    \caption{VGG13}
    \end{subfigure}

    \begin{subfigure}[t]{0.95\textwidth}
    \includegraphics[width=0.95\textwidth]{vgg16_bn-crop.pdf}
    \caption{VGG16}
    \end{subfigure}
    
    \begin{subfigure}[t]{0.95\textwidth}
    \includegraphics[width=0.95\textwidth]{vgg19_bn-crop.pdf}
    \caption{VGG19}
    \end{subfigure}

    \caption{\small{BatchNorm bias distribution of pre-trained VGG11/13/16/19 on ImageNet. Orange/blue are positive/negative biases. The first plot corresponds to the lowest layer (closest to the input).}}
    \label{fig:vgg11-appendix}
\end{figure}

\section{More experiments}
\begin{figure}[ht]
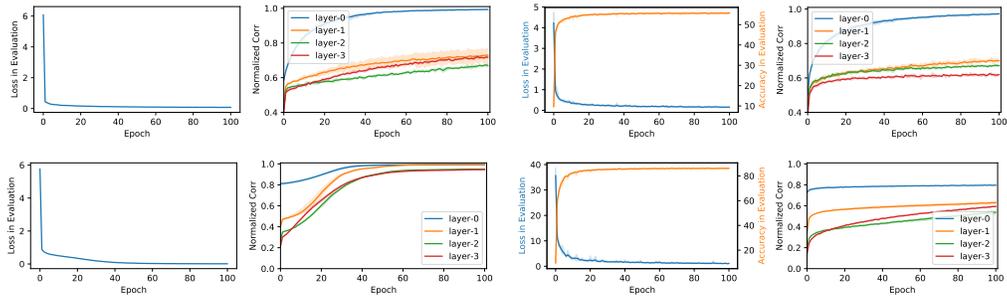

    \centering
    \includegraphics[width=0.48\textwidth]{all_100_3/gaussian_fc_has_bn_regen.pdf}
    \includegraphics[width=0.48\textwidth]{all_100_3/cifar10_fc_has_bn.pdf}
    \includegraphics[width=0.48\textwidth]{all_100_3/gaussian_cnn_has_bn_regen.pdf}
    \includegraphics[width=0.48\textwidth]{all_100_3/cifar10_cnn_has_bn.pdf}
    \vspace{-0.15in}
    \caption{\small{Loss and correlation between teacher and student nodes over optimization, all using BatchNorm. Gaussian (left) versus CIFAR10 (right). FC (top) versus CNN (bottom). \texttt{Layer-0} is the lowest layer (closest to the input). The mean best correlation steadily goes up over time.}}
    \label{fig:activation-corr-with-bn}
\end{figure}

\begin{figure}[ht]
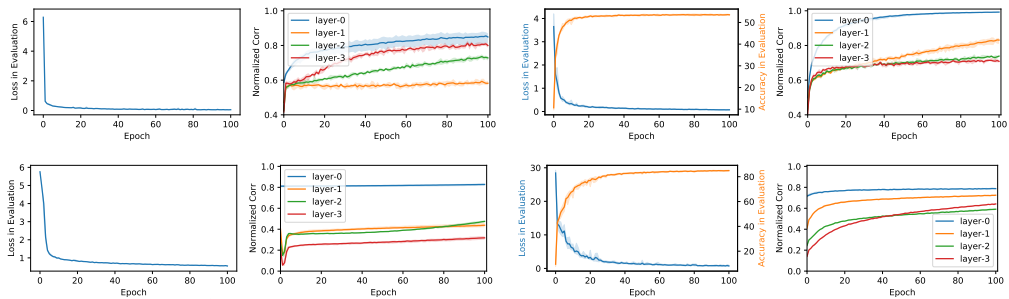

    \centering
    \includegraphics[width=0.48\textwidth]{all_100_3/gaussian_fc_no_bn_regen.pdf}
    \includegraphics[width=0.48\textwidth]{all_100_3/cifar10_fc_no_bn.pdf}
    \includegraphics[width=0.48\textwidth]{all_100_3/gaussian_cnn_no_bn_regen.pdf}
    \includegraphics[width=0.48\textwidth]{all_100_3/cifar10_cnn_no_bn.pdf}
    \vspace{-0.15in}
    \caption{\small{Same plots as Fig.~\ref{fig:activation-corr-with-bn} but trained \textbf{\emph{without}} BatchNorm.}}
    \label{fig:activation-corr-without-bn}
\end{figure}

\begin{figure}[ht]
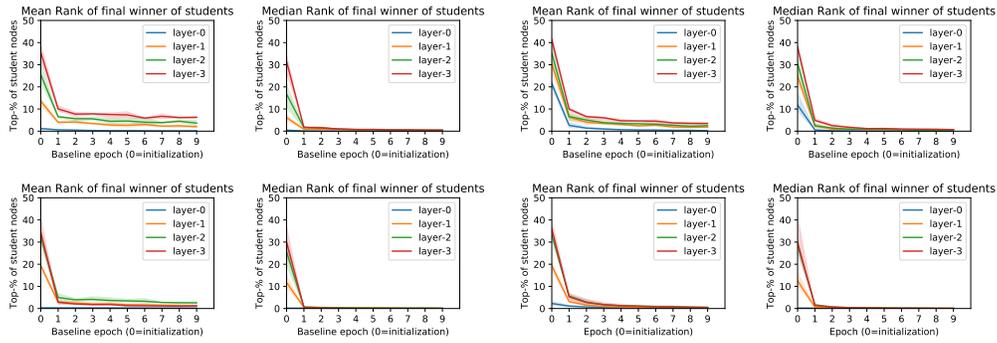

    \centering
    \includegraphics[width=0.48\textwidth]{all_100_3/gaussian_fc_has_bn_regen_rank.pdf}
    \includegraphics[width=0.48\textwidth]{all_100_3/cifar10_fc_has_bn_rank.pdf}
    \includegraphics[width=0.48\textwidth]{all_100_3/gaussian_cnn_has_bn_regen_rank.pdf}
    \includegraphics[width=0.48\textwidth]{all_100_3/cifar10_cnn_has_bn_rank.pdf}
    \caption{\small{Mean/Median rank at different epoch of the final winning student nodes that best match the teacher nodes after the training \emph{with} BatchNorm. Gaussian (left) and CIFAR10 (right). FC (top) and CNN (bottom). 
    \ifwithappendix
For training without BatchNorm, see Fig.~\ref{fig:mean-median-rank-without-bn}.
\fi
    }}
    \label{fig:mean-median-rank-with-bn}
\end{figure}

\begin{figure}
    \centering
    \includegraphics[width=0.48\textwidth]{all_100_3/gaussian_fc_has_bn_regen_rank.pdf}
    \includegraphics[width=0.48\textwidth]{all_100_3/cifar10_fc_has_bn_rank.pdf}
    \includegraphics[width=0.48\textwidth]{all_100_3/gaussian_cnn_has_bn_regen_rank.pdf}
    \includegraphics[width=0.48\textwidth]{all_100_3/cifar10_cnn_has_bn_rank.pdf}
    \caption{\small{Mean/Median rank at different epoch of the final winning student nodes that best match the teacher nodes after the training using BatchNorm. Gaussian (left) versus CIFAR10 (right). FC (top) versus CNN (bottom).}}
    \label{fig:mean-median-rank-with-bn-appendix}
\end{figure}

\begin{figure}
    \centering
    \includegraphics[width=0.48\textwidth]{all_100/gaussian_fc_no_bn_regen_rank.pdf}
    \includegraphics[width=0.48\textwidth]{all_100/cifar10_fc_no_bn_rank.pdf}
    \includegraphics[width=0.48\textwidth]{all_100/gaussian_cnn_no_bn_regen_rank.pdf}
    \includegraphics[width=0.48\textwidth]{all_100/cifar10_cnn_no_bn_rank.pdf}
    \caption{\small{Mean/Median rank at different epoch of the final winning student nodes that best match the teacher nodes after the training \emph{without} BatchNorm. Gaussian (left) versus CIFAR10 (right). FC (top) versus CNN (bottom).}}
    \label{fig:mean-median-rank-without-bn}
\end{figure}

\begin{figure}
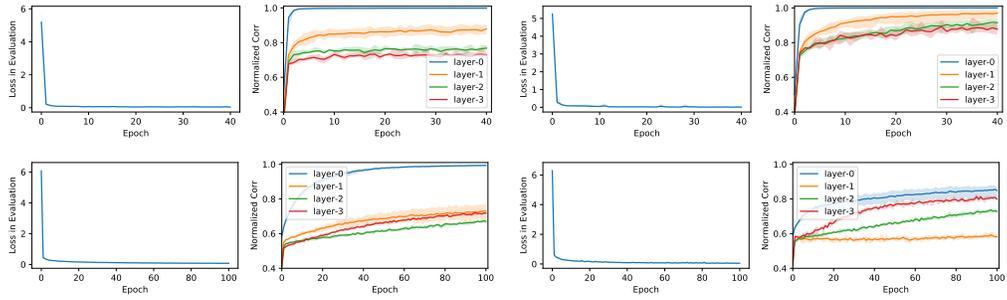

    \centering
    \includegraphics[width=0.48\textwidth]{all_3/gaussian_small_has_bn.pdf}
    \includegraphics[width=0.48\textwidth]{all_3/gaussian_small_no_bn.pdf}
    \includegraphics[width=0.48\textwidth]{all_3/gaussian_fc_has_bn_regen.pdf}
    \includegraphics[width=0.48\textwidth]{all_3/gaussian_fc_no_bn_regen.pdf}
    \caption{Gaussian data with small (10-15-20-25) and large (50-75-100-125) FC models. Small models (top) versus large models (bottom). With BN (left) versus Without BN (right).}
    \label{fig:small-large-model-appendix}
\end{figure}

\begin{figure}
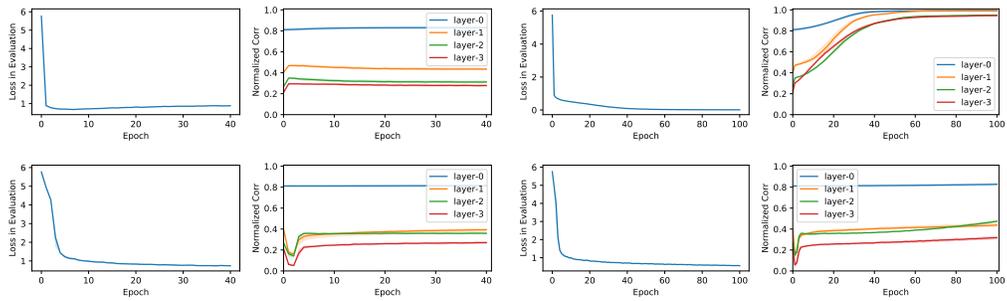

    \centering
    \includegraphics[width=0.48\textwidth]{all_3/gaussian_cnn_has_bn.pdf}
    \includegraphics[width=0.48\textwidth]{all_3/gaussian_cnn_has_bn_regen.pdf}
    \includegraphics[width=0.48\textwidth]{all_3/gaussian_cnn_no_bn.pdf}
    \includegraphics[width=0.48\textwidth]{all_3/gaussian_cnn_no_bn_regen.pdf}
    \caption{Gaussian CNN. With BN (top) versus Without BN (bottom). Finite Dataset (left) versus Infinite Dataset (right).}
    \label{fig:finite-dataset-appendix}
\end{figure}

\end{document}